\newcommand{\dist}{\mathrm{dist}}
\newcommand{\prox}[1]{\mathrm{prox}_{#1}}
\newcommand{\dom}{\mathop{\mathrm{dom}}}
\newcommand{\zero}{\mathbf{0}}
\newcommand{\KL}{K{\L}~}
\newcommand{\inner}[2]{\langle #1, #2 \rangle}
\newcommand{\red}[1]{\color{red} #1 \color{black}}
\DeclareMathOperator*{\argmin}{arg\,min}
\newtheorem{lemma}{Lemma} 
\newtheorem{assum}{Assumption} 
\newtheorem{definition}{Definition}
		\renewcommand{\caption}[2][\relax]{% Make a new \caption
			{\raggedright\textbf{\ALG@name~\thealgorithm} ##2\par}%
			\ifx\relax##1\relax % #1 is \relax
			\addcontentsline{loa}{algorithm}{\protect\numberline{\thealgorithm}##2}%
			\else % #1 is not \relax
			\addcontentsline{loa}{algorithm}{\protect\numberline{\thealgorithm}##1}%
			\fi
			\kern2pt\hrule\kern2pt
		}
\begin{document}
%\onecolumn
\title{A Fast and Convergent Proximal Algorithm for Regularized Nonconvex and Nonsmooth Bi-level Optimization}

%\twocolumn[

\author[1]{\textit{Ziyi Chen}}
\author[2]{\textit{Bhavya Kailkhura}}
\author[1]{\textit{Yi Zhou}}

\affil[1]{Department of Electrical and Computer Engineering, University of Utah, Salt Lake City, UT, US}
\affil[2]{Lawrence Livermore National Lab, Livermore, CA, US}

\affil[1]{\small {Email: \{u1276972,yi.zhou\}@utah.edu}}
\affil[2]{\small {Email: kailkhura1@llnl.gov}}
\maketitle
\thispagestyle{empty} 
 
%################################################
\doparttoc % Tell to minitoc to generate a toc for the parts
\faketableofcontents % Run a fake tableofcontents command for the partocs
%\part{} 
%################################################

\begin{abstract}
Many important machine learning applications involve regularized nonconvex bi-level optimization. However, the existing gradient-based bi-level optimization algorithms cannot handle nonconvex or nonsmooth regularizers, and they suffer from a high computation complexity in 
nonconvex bi-level optimization. In this work, we study a proximal gradient-type algorithm that adopts the approximate implicit differentiation (AID) scheme for nonconvex bi-level optimization with possibly nonconvex and nonsmooth regularizers. In particular, the algorithm applies the Nesterov's momentum to accelerate the computation of the implicit gradient involved in AID. We provide a comprehensive analysis of the global convergence properties of this algorithm through identifying its intrinsic potential function. In particular, we formally establish the convergence of the model parameters to a critical point of the bi-level problem, and obtain an improved computation complexity $\mathcal{O}(\kappa^{3.5}\epsilon^{-2})$ over the state-of-the-art result. 
Moreover, we analyze the asymptotic convergence rates of this algorithm under a class of local nonconvex geometries characterized by a {\L}ojasiewicz-type gradient inequality. Experiment on hyper-parameter optimization demonstrates the effectiveness of our algorithm.
\end{abstract}

\section{Introduction}
Bi-level optimization has become an important and popular optimization framework that covers a variety of emerging machine learning applications, e.g., meta-learning~\citep{franceschi2018bilevel,bertinetto2018meta,rajeswaran2019meta,ji2020convergence}, hyperparameter optimization~\citep{franceschi2018bilevel,shaban2019truncated,feurer2019hyperparameter}, reinforcement learning~\citep{konda2000actor,hong2020two}, etc. A standard formulation of bi-level optimization takes the following form. 
\begin{align}
\min_{x\in\mathbb{R}^d} f(x, y^*(x)), \quad \mbox{where} \quad y^*(x)= \argmin_{y\in\mathbb{R}^{p}} g(x,y), \nonumber
\end{align}
where the upper- and lower-level objective functions $f$ and $g$ are both jointly continuously differentiable. To elaborate, bi-level optimization aims to minimize the upper-level compositional objective function $f(x,y^*(x))$, in which $y^*(x)$ is the minimizer of the lower-level objective function $g(x,y)$. 
%For example, in few-shot meta-learning, the upper-level objective function ...., while the lower-level objective function...

Solving the above bi-level optimization problem is highly non-trivial as the problem involves two nested minimization problems. In the existing literature, many algorithms have been developed for bi-level optimization. In the early works, \cite{hansen1992new,shi2005extended,moore2010bilevel} reformulated the bi-level problem into a single-level problem with constraints on the optimality conditions of the lower-level problem, yet this reformulation involves a large number of constraints that are hard to address in practice. More recently, gradient-based bi-level optimization algorithms have been developed, which leverage either the approximate implicit differentiation (AID) scheme~\citep{domke2012generic,pedregosa2016hyperparameter,gould2016differentiating,liao2018reviving,ghadimi2018approximation,grazzi2020iteration,lorraine2020optimizing} or the iterative differentiation (ITD) scheme~\citep{domke2012generic,maclaurin2015gradient,franceschi2017forward,franceschi2018bilevel,shaban2019truncated,grazzi2020iteration} to estimate the gradient of the upper-level function. In particular, the AID scheme is more popular due to its simplicity and computation efficiency. Specifically, bi-level optimization algorithm with AID (referred to as BiO-AID) has been analyzed for (strongly)-convex upper- and lower-level functions \citep{liu2020generic}, which do not cover bi-level problems in modern machine learning that usually involve nonconvex upper-level objective functions. On the other hand, recent studies have analyzed the convergence of BiO-AID with nonconvex upper-level function and strongly-convex lower-level function, and established the convergence of a certain type of gradient norm to zero \citep{ji2020bilevel,ghadimi2018approximation,hong2020two}. 

However, the existing gradient-based nonconvex bi-level optimization algorithms have limitations in several perspectives. First, they are not applicable to bi-level problems that involve possibly nonsmooth and nonconvex regularizers. For example, in the application of data hyper-cleaning, one can improve the learning performance by adding a nonsmooth and nonconvex regularizer to push the weights of the clean samples towards 1 while push those of the contaminated samples towards 0 (see Section \ref{sec: experiment} for more details). Second, the convergence guarantees of these algorithms typically ensure a weak gradient norm convergence, which does not necessarily imply the desired convergence of the model parameters. Furthermore, these algorithms suffer from a high computation complexity in nonconvex bi-level optimization. 
\textbf{The overarching goal of this work} is to develop an efficient and convergent proximal-type algorithm for solving regularized nonconvex and nonsmooth bi-level optimization problems and address the above important issues. We summarize our contributions as follows.

\subsection{Our Contributions}
%\red{(Add global computational complexity??)}

We propose a proximal BiO-AIDm algorithm (see \Cref{alg:main_deter}) and study its convergence properties. This algorithm is a proximal variant of the BiO-AID algorithm for solving the following class of regularized nonsmooth and nonconvex bi-level optimization problems.
\begin{align*}
&\min_{x\in\mathbb{R}^d} f(x, y^*(x)) + h(x), \\ \quad\mbox{where} \quad &y^*(x)= \argmin_{y\in\mathbb{R}^{p}} g(x,y),
\end{align*}
where the upper-level objective function $f$ is nonconvex, the lower-level objective function $g$ is strongly-convex for any fixed $x$, and the regularizer $h$ is possibly nonsmooth and nonconvex. In particular, our algorithm applies the Nesterov's momentum to accelerate the computation of the implicit gradient involved in the AID scheme.

We first analyze the global convergence properties of proximal BiO-AIDm under standard Lipschitz and smoothness assumptions on the objective functions. The key to our analysis is to show that proximal BiO-AID admits an intrinsic potential function $H(x_k,y_k)$ that takes the form %\red{Need update}
\begin{align}
	H(x,y'):= \Phi(x) +h(x) +\frac{7}{8}\|y^T(x,y')-y^*(x)\|^2, \nonumber
\end{align}
where $y^T(x,y')$ is obtained by applying the Nesterov's accelerated gradient descent to minimize {$g(x,\cdot)$ with initial point $y'$} for $T$ iterations. In particular, we prove that such a potential function is monotonically decreasing along the optimization trajectory, i.e., $H(x_{k+1},y_{k+1}) < H(x_k,y_k)$, which implies that proximal BiO-AIDm can be viewed as a descent-type algorithm and is numerically stable. Based on this property, we formally prove that every limit point of the model parameter trajectory $\{x_k\}_k$ generated by proximal BiO-AIDm is a critical point of the regularized bi-level problem. Furthermore, when the regularizer is convex, we show that proximal BiO-AIDm requires a computation complexity of $\widetilde{\mathcal{O}}(\kappa^{3.5}\epsilon^{-2})$ (number of gradient, Hessian-vector product and proximal evaluations) for achieving a critical point $x$ that satisfies $\|G(x)\| \le \epsilon$, where $\kappa$ denotes the problem condition number and $G(x)$ denotes the proximal gradient mapping.  
This is the first global convergence and complexity result of proximal BiO-AIDm in regularized nonsmooth and nonconvex bi-level optimization, and it improves the state-of-the-art complexity of BiO-AID (for smooth nonconvex bi-level optimization) by a factor of ${\widetilde{\mathcal{O}}}(\sqrt{\kappa})$.

Besides investigating the global convergence properties, we further establish the asymptotic function value convergence rates of proximal BiO-AIDm under a local {\L}ojasiewicz-type nonconvex geometry, which covers a broad spectrum of local nonconvex geometries. Specifically, we characterize the asymptotic convergence rates of proximal BiO-AIDm in the full spectrum of the {\L}ojasiewicz geometry parameter $\theta$. We prove that as the local geometry becomes sharper (i.e., with a larger $\theta$), the asymptotic convergence rate of proximal BiO-AIDm boosts from sublinear convergence to superlinear convergence.

\subsection{Related Work}
{\bf Bi-level Optimization Algorithms.} Bi-level optimization has been studied for decades~\citep{bracken1973mathematical}, and various types of bi-level algorithms have been proposed, including but not limited to single-level penalized methods~\citep{shi2005extended,moore2010bilevel}, and gradient-based methods via AID or ITD-based hypergradient estimation~\citep{domke2012generic,pedregosa2016hyperparameter,franceschi2018bilevel,ghadimi2018approximation,hong2020two,liu2020generic,li2020improved,grazzi2020iteration,ji2020bilevel,lorraine2020optimizing,ji2021lower}. \cite{huang2021enhanced} proposed a Bregman distance-based method. In particular, \citep{ghadimi2018approximation,hong2020two,ji2020bilevel,yang2021provably,chen2021single,guo2021randomized,huang2021enhanced} characterized the complexity analysis for their proposed methods for bi-level optimization problem under different types of loss geometries. \citep{ji2021lower} studied the lower complexity bounds for bi-level optimization under (strongly) convex geometry and proposed a nearly-optimal accelerated algorithm. All the existing analysis of nonconvex bi-level optimization algorithms focuses on the gradient norm convergence. 
In this paper, we formally establish the parameter {and function value} convergence of proximal BiO-AID in regularized nonconvex and nonsmooth bi-level optimization.

% in theory and in applications such as meta-learning. %are shown to 

%Other types of loss geometries have also been studied in some works. 

%\vspace{0.2cm}
{\bf Applications of Bi-level Optimization.} Bi-level optimization has been widely applied to meta-learning ~\citep{snell2017prototypical,franceschi2018bilevel,rajeswaran2019meta,zugner2019adversarial,ji2020multi,ji2021bilevel}, hyperparameter optimization~\citep{franceschi2017forward,shaban2019truncated}, reinforcement learning~\citep{konda2000actor,hong2020two}, and data poisoning~\citep{mehra2020robust}.
 For example,
 \cite{snell2017prototypical} reformulated the meta-learning objective function under a shared embedding model into a bi-level optimization problem. \cite{rajeswaran2019meta} proposed a bi-level optimizer named iMAML as an efficient variant of model-agnostic meta-learning (MAML)~\citep{finn2017model}, and analyzed the convergence of iMAML under the strongly-convex inner-loop loss. \cite{fallah2020convergence} characterized the convergence of MAML and first-order MAML under nonconvex loss functions. \cite{ji2020convergence} studied the convergence behaviors of almost no inner loop (ANIL)~\citep{raghu2019rapid} under different inner-loop loss geometries {of the MAML objective} function. Recently \cite{mehra2020robust} devised bilevel optimization based data poisoning attacks on certifiably robust classifiers.
 %In this paper, we {\color{red} briefly summarize the contributions here}

{\bf Nonconvex Kurdyka-{\L}ojasiewicz Geometry.} A broad class of regular functions has been shown to satisfy the local nonconvex \KL geometry \citep{Bolte2007}, which affects the asymptotic convergence rates of gradient-based optimization algorithms. The \KL geometry has been exploited to study the convergence of various first-order algorithms for solving minimization problems, including gradient descent \citep{Attouch2009},  alternating gradient descent \citep{Bolte2014}, distributed gradient descent \citep{Zhou2016,Zhou_2017a}, accelerated gradient descent \citep{Li2017}. It has also been exploited to study the convergence of second-order algorithms such as Newton's method \citep{Noll2013,Frankel2015} and cubic regularization method \citep{zhou2018convergence}.

%\vspace{-0.2cm}
\section{Problem Formulation and Preliminaries}\label{sec:alg}
In this paper, we consider the following regularized nonconvex bi-level optimization problem:
\begin{align*}
&\min_{x\in\mathbb{R}^d} f(x, y^*(x)) + h(x), \tag{P}\\ \quad\mbox{where} \quad &y^*(x)= \argmin_{y\in\mathbb{R}^{p}} g(x,y),
\end{align*}
where both the upper-level objective function $f$ and the lower-level objective function $g$ are jointly continuously differentiable, and the regularizer $h$ is possibly nonsmooth and nonconvex. We note that adding a regularizer to the bi-level optimization problem allows us to impose desired structures on the solution, and this is important for many machine learning applications. For example, in the application of data hyper-cleaning {(see the experiment in Section \ref{sec: experiment} for more details)}, one aims to improve the learning performance by adding a regularizer to push the weights of the clean samples towards 1 while push the weights of the contaminated samples towards 0. Such a regularizer often takes a nonsmooth and nonconvex form.

%\textbf{Example 1.} {\em Hyperparameter optimization.}  

%\textbf{Example 2.} {\em Meta-learning with network pruning.}

To simplify the notation, throughout the paper we define the function $\Phi(x):= f(x, y^*(x))$.
We also adopt the following standard assumptions regarding the regularized bi-level optimization problem (P). 
\begin{assum}\label{assum:geo}
The functions in the regularized bi-level optimization problem (P) satisfy:
\begin{enumerate}[leftmargin=*,topsep=0pt,itemsep=.5mm]
    \item Function $g(x,\cdot)$ is $\mu$-strongly convex for all $x$ and function $\Phi(x)=f(x,y^*(x))$ is nonconvex;
    \item Function $h$ is proper and lower-semicontinuous (possibly nonsmooth and nonconvex);
    \item Function $(\Phi+h)(x)$ is bounded below and has bounded sub-level sets.
\end{enumerate}
\end{assum}

In Assumption \ref{assum:geo}, the regularizer $h$ can be any nonsmooth and nonconvex function so long as it is closed. This covers most of the regularizers that we use in practice. 
%Moreover, item 3 ensures that the bi-level problem has at least one solution.
In addition to Assumption \ref{assum:geo}, we also impose the following Lipschitz continuity and smoothness conditions on the objective functions, which are widely considered in the existing literature \citep{ghadimi2018approximation,ji2020convergence}. In the following assumption, we denote $z:=(x,y)$.

\begin{assum}\label{assum:lip}
The functions $f(z)$ and $g(z)$ in the bi-level problem (P) satisfy:
\begin{enumerate}[leftmargin=*,topsep=0pt,itemsep=.5mm]
\item Function $f(z)$ is $M$-Lipschitz. Gradients $\nabla f(z)$ and $\nabla g(z)$ are $L$-Lipschitz;
    \item Jacobian $\nabla_x\nabla_y g(z)$ and Hessian $\nabla_y^2 g(z)$ are $\tau$-Lipschitz and $\rho$-Lipschitz, respectively.
\end{enumerate}
\end{assum}

Assumptions \ref{assum:geo} and \ref{assum:lip} imply that the mapping $y^*(x)$ is $\kappa$-Lipschitz, where $\kappa=L/\mu>1$ denotes the problem condition number \citep{lin2019gradient,chen2021proximal}.

Lastly, note that the problem (P) is rewritten as the regularized minimization problem $\min_{x\in \mathbb{R}^d} \Phi(x)+h(x)$, which can be nonsmooth and nonconvex. Therefore, our optimization goal is to {find a critical point $x^*$ of the function $\Phi(x)+h(x)$} that satisfies the optimality condition $\zero \in \partial (\Phi + h)(x^*)$. Here, $\partial$ denotes the following generalized notion of subdifferential.

\begin{definition}(Subdifferential and critical point, \citep{rockafellar2009variational})\label{def:sub}
	The Frech\'et subdifferential $\widehat\partial F$ of a function $F$ at $x\in \dom F$ is the set of $u\in \mathbb{R}^d$ defined as
	\begin{align*}
	\widehat\partial F(x) = \Big\{u: \liminf_{z\neq x, z\to x} \frac{F(z) - F(x) - u^\top (z-x)}{\|z-x\|} \ge 0 \Big\},
	\end{align*}
	and the limiting subdifferential $\partial F$ at $x\in \textnormal{dom}~F$ is the graphical closure of $\widehat\partial F$ defined as:
	\begin{align*}
	\partial F(x) \!=\! \big\{ \!u\!: \exists (x_k, F(x_k)) \!\to\! (x, F(x)), \widehat{\partial} F(x_k) \!\ni\! u_k  \!\to\! u \big\}.
	\end{align*}
	The set of {critical points} of $F$ is defined as $\{ x: \zero\in\partial F(x) \}$. 
\end{definition}

\section{Proximal Bi-level Optimization with AID}
In this section, we introduce the proximal bi-level optimization algorithm with momentum accelerated approximate implicit differentiation (referred to as {proximal BiO-AIDm}).
Recall that $\Phi(x):= f(x, y^*(x))$. The main challenge for solving the regularized bi-level optimization problem (P) is the computation of the gradient $\nabla \Phi(x)$, which involves higher-order derivatives of the lower-level function. Fortunately, this gradient can be effectively estimated using the popular AID scheme as we elaborate below. 

First, note that $\nabla \Phi(x)$ has been shown in \citep{ji2020bilevel} to take the following analytical form.
\begin{align}
\nabla &\Phi(x_k) =
\nabla_x f(x_k,y^*(x_k)) -\nabla_x \nabla_y g(x_k,y^*(x_k)) v_k^*, \nonumber 
\end{align}
where $v_k^*$ corresponds to the solution of the linear system $\nabla_y^2 g(x_k,y^*(x_k))v=
\nabla_y f(x_k,y^*(x_k))$. In particular, $y^*(x_k)$ is the minimizer of the strongly convex function $g(x_k,\cdot)$, and it can be effectively approximated by running $T$ {Nesterov's} accelerated gradient descent updates on $g(x_k,\cdot)$ and obtaining the output $y_{k+1}$ as the approximation. With this approximated minimizer, the AID scheme estimates the gradient $\nabla \Phi(x_k)$ as follows:
\begin{align}\label{hyper-aid}
\text{(AID):}\quad \widehat\nabla &\Phi(x_k) = \nonumber\\
&\nabla_x f(x_k,y_{k+1}) -\nabla_x \nabla_y g(x_k,y_{k+1})\widehat{v}_k^*,
\end{align}
where $\widehat{v}_k^*$ is the solution of the approximated linear system $\nabla_y^2 g(x_k,y_{k+1}) v =
\nabla_y f(x_k,y_{k+1})$, which can be efficiently solved by standard conjugate-gradient (CG) solvers.  
For simplicity of the discussion, we assume that $\widehat{v}_k^*$ is exactly computed throughout the paper. Moreover, the Jacobian-vector product involved in \cref{hyper-aid} can be efficiently computed using the existing automatic differentiation packages \citep{domke2012generic,grazzi2020iteration}.

Based on the estimated gradient $\widehat{\nabla} \Phi(x)$, we can then apply the standard proximal gradient algorithm (a.k.a. forward-backward splitting) \citep{lions1979splitting} to solve the regularized optimization problem (P). This algorithm is referred to as proximal BiO-AIDm and is summarized in \Cref{alg:main_deter}. Specifically, in each outer loop $k$, we first run $T$ accelerated gradient descent steps {with Nesterov's momentum with initial point $y_k$} to minimize  $g(x_k,\cdot)$ and find an approximated minimizer $y_{k+1}=y^T(x_k,y_k)\approx y^*(x_k)$, where we use the notation $y^T(x_k,y_k)$ to emphasize the dependence on $x_k$ and $y_k$. Then, this approximated minimizer
is utilized by the AID scheme to estimate $\nabla \Phi(x_k)$. Finally, we apply the proximal gradient algorithm to minimize the regularized objective function $\Phi(x)+h(x)$. Here, the proximal mapping of any function $h$ at $v$ is defined as
\begin{align*}
    \prox{h} (v):= \argmin_{u\in\mathbb{R}^d} \Big\{h(u) + \frac{1}{2}\|u-v\|^2\Big\}.
\end{align*} 

\begin{algorithm}[tbh]
	\red{\caption{Proximal bi-level optimization with momentum accelerated AID (proximal BiO-AIDm)}}
	\small
	\label{alg:main_deter}
	\begin{algorithmic}[1]
		\STATE {\bfseries Input:}  Stepsizes $\alpha, \beta >0$, {momentum parameter $\eta>0$,} initializations $x_0, y_0$.
%	        \STATE {\bfseries} 
		\FOR{$k=0,1,2,...,{K-1}$}
		\STATE{Set $y^0(x_k,y_k)=u_{0}=y_k$}
		\FOR{$t=1,....,T$}
		\vspace{0.05cm}
		\STATE{{$u_{t} = y^{t-1}(x_k,y_k)-\alpha \nabla_y g(x_k,y^{t-1}(x_k,y_k))$}}
		\STATE{{$y^t(x_k,y_k) = u_{t} + \eta\big(u_{t}-u_{t-1}\big)$}}
		\vspace{0.05cm}
		\ENDFOR
        \STATE{Set $y_{k+1}=y^{T}(x_{k},y_{k})$}
        \STATE{AID: compute $\widehat\nabla \Phi(x_k)$ according to \cref{hyper-aid} }
        \STATE{Update $x_{k+1} \in \prox{\beta h} (x_k- \beta \widehat\nabla \Phi(x_k))$}
		\ENDFOR
		{\STATE {\bfseries Output:} $x_K$, $y_K$.}
	\end{algorithmic}
\end{algorithm}

Under Assumption \ref{assum:geo}.1 and Assumption \ref{assum:lip}, the following lemma characterizes the smoothness of $\Phi$ and the gradient estimation error $\|\widehat \nabla \Phi(x_k)- \nabla \Phi(x_k)\|$ of the AID scheme. Throughout the paper, we denote $\kappa=\frac{L}{\mu}$ as the condition number of the bi-level problem (P). 

\begin{lemma}[\cite{ghadimi2018approximation}]\label{le:aidhy}
Let Assumptions~\ref{assum:geo}.1 and \ref{assum:lip} hold. Then, function $\Phi$ is differentiable and the gradient $\nabla\Phi$ is $L_\Phi$-Lipschitz with $L_{\Phi} = L + \frac{2L^2+\tau M^2}{\mu} + \frac{\rho L M+L^3+\tau M L}{\mu^2} + \frac{\rho L^2 M}{\mu^3}=\mathcal{O}(\kappa^3)$.
Moreover, the gradient estimate obtained by the AID scheme satisfies
\begin{align}
\|\widehat \nabla \Phi(x_k)- \nabla \Phi(x_k)\|^2 \leq \Gamma \|y_{k+1} - y^*(x_k)\|^2. \nonumber
\end{align}
where $\Gamma =3L^2+\frac{3\tau^2 M^2}{\mu^2} + 6L^2\big(1+\sqrt{\kappa}\big)^2\big(\kappa +\frac{\rho M}{\mu^2}\big)^2=\mathcal{O}(\kappa^3).$ 
\end{lemma}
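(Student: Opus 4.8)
The plan is to establish the two claims of \Cref{le:aidhy} separately, both by unrolling the analytical form of $\nabla\Phi(x_k)$ and comparing it with the AID estimate in \cref{hyper-aid}. Since the lemma is attributed to \cite{ghadimi2018approximation}, the proof is essentially bookkeeping, but I would organize it around the following structure.

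\textbf{Smoothness of $\Phi$.} First I would recall the closed form $\nabla\Phi(x)=\nabla_x f(x,y^*(x))-\nabla_x\nabla_y g(x,y^*(x))v^*(x)$, where $v^*(x)$ solves $\nabla_y^2 g(x,y^*(x))v=\nabla_y f(x,y^*(x))$. To bound $\|\nabla\Phi(x_1)-\nabla\Phi(x_2)\|$, I would add and subtract intermediate terms so that each difference isolates one varying factor: the Lipschitzness of $\nabla_x f$ (constant $L$), of $\nabla_x\nabla_y g$ (constant $\tau$), and of the map $x\mapsto v^*(x)$. The key sub-step is controlling $\|v^*(x_1)-v^*(x_2)\|$: one writes $v^*(x_i)=[\nabla_y^2 g(x_i,y^*(x_i))]^{-1}\nabla_y f(x_i,y^*(x_i))$, uses $\|[\nabla_y^2 g]^{-1}\|\le 1/\mu$, $\|\nabla_y f\|\le M$, the $\rho$-Lipschitzness of the Hessian, the $L$-Lipschitzness of $\nabla_y f$, the $\kappa$-Lipschitzness of $y^*$, and the standard resolvent identity $A_1^{-1}-A_2^{-1}=A_1^{-1}(A_2-A_1)A_2^{-1}$. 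Collecting all the pieces with the appropriate powers of $1/\mu$ and $\kappa$ yields exactly the stated constant $L_\Phi=L+\frac{2L^2+\tau M^2}{\mu}+\frac{\rho LM+L^3+\tau ML}{\mu^2}+\frac{\rho L^2 M}{\mu^3}=\mathcal{O}(\kappa^3)$.

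\textbf{AID estimation error.} For the second claim I would subtract \cref{hyper-aid} from the exact $\nabla\Phi(x_k)$ and split into three groups: the difference $\nabla_x f(x_k,y_{k+1})-\nabla_x f(x_k,y^*(x_k))$, bounded by $L\|y_{k+1}-y^*(x_k)\|$; the difference in the Jacobian factor $\nabla_x\nabla_y g$, bounded by $\tau\|y_{k+1}-y^*(x_k)\|$ times $\|\widehat v_k^*\|\le M/\mu$; and the difference $\widehat v_k^*-v_k^*$ of the two linear-system solutions, which again uses the resolvent identity together with the Lipschitzness of the Hessian ($\rho$), of $\nabla_y f$ ($L$), and the bound $\|v_k^*\|\le M/\mu$. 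Applying $(a+b+c)^2\le 3(a^2+b^2+c^2)$ (or a more careful grouping to get the factor $(1+\sqrt\kappa)^2$) and collecting terms gives the constant $\Gamma=3L^2+\frac{3\tau^2 M^2}{\mu^2}+6L^2(1+\sqrt\kappa)^2(\kappa+\frac{\rho M}{\mu^2})^2=\mathcal{O}(\kappa^3)$.

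\textbf{Main obstacle.} The routine-but-delicate part is keeping track of the constants: the $1+\sqrt\kappa$ factor in $\Gamma$ signals that the bound on $\|\widehat v_k^* - v_k^*\|$ is not obtained by a single naive resolvent estimate but by carefully accounting for how the perturbation in $y$ propagates through both the Hessian and the right-hand side, and then through the inverse Hessian whose conditioning contributes the $\kappa$-dependence. I would therefore isolate a standalone estimate $\|\widehat v_k^*-v_k^*\|\le c\,\|y_{k+1}-y^*(x_k)\|$ with an explicit $c$, prove it cleanly, and only then assemble the final squared bound. Everything else — the triangle inequalities, the operator-norm bounds $\|[\nabla_y^2 g]^{-1}\|\le 1/\mu$, and the uniform bounds $\|\nabla_y f\|\le M$, $\|v^*\|\le M/\mu$ — is standard and can be stated without detailed derivation, citing \cite{ghadimi2018approximation} for the precise constants.
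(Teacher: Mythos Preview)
The paper does not prove \Cref{le:aidhy} at all: it is stated as a cited result from \cite{ghadimi2018approximation} (with constants in the form that also appear in \cite{ji2020bilevel}), and no derivation is given anywhere in the main text or the appendix. So there is nothing in this paper to compare your proof against; your outline is essentially a reconstruction of the argument from the cited references, and the overall strategy---resolvent identity for the Hessian inverse, Lipschitz bounds on each factor, and a final squaring with an $(a+b+c)^2$-type inequality---is correct and standard.

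One remark on the obstacle you flag: the factor $(1+\sqrt{\kappa})^2$ in $\Gamma$ is not going to emerge from ``careful grouping'' of the three pieces you list when $\widehat v_k^*$ is the \emph{exact} solution of the perturbed linear system. In the original analyses this factor comes from bounding the error of an \emph{approximate} solve (e.g., a fixed number of CG or accelerated steps on the quadratic $\tfrac12 v^\top \nabla_y^2 g\, v - v^\top \nabla_y f$), whose contraction rate introduces the $\sqrt{\kappa}$ dependence. The present paper later assumes $\widehat v_k^*$ is computed exactly, in which case your resolvent estimate already gives $\|\widehat v_k^*-v_k^*\|\le(\kappa+\rho M/\mu^2)\|y_{k+1}-y^*(x_k)\|$ and the stated $\Gamma$ is simply a (loose) upper bound inherited from the more general cited result. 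So your plan is fine; just do not expect the $(1+\sqrt{\kappa})^2$ to fall out of the exact-solve analysis---either carry the tighter constant or cite the references for the stated one.
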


%\vspace{-0.1cm}
\section{Global Convergence and Complexity of Proximal BiO-AID}
%\vspace{-0.1cm}
In this section, we study the global convergence properties of proximal BiO-AIDm for general regularized nonconvex and nonsmooth bi-level optimization. 

First, note that the main update of proximal BiO-AIDm in \Cref{alg:main_deter} follows from the proximal gradient algorithm, which has been proven to generate a convergent optimization trajectory to a critical point in general nonconvex optimization \citep{Attouch2009}. Hence, one may expect that proximal BiO-AIDm should share the same convergence guarantee. However, this is not obvious as the proof of convergence of the proximal gradient algorithm heavily relies on the fact that it is a descent-type algorithm, i.e., the objective function is strictly decreasing over the iterations. As a comparison, the main update of proximal BiO-AIDm applies an approximated gradient $\widehat\nabla \Phi(x_k)$, which is correlated with both the upper- and lower-level objective functions through the AID scheme and destroys the descent property of the proximal gradient update, and hence conceals the proof of convergence. This is the key challenge for proving the formal convergence of proximal BiO-AIDm in general nonsmooth and nonconvex optimization.

Our key result next proves that proximal BiO-AIDm does admit an intrinsic potential function that is monotonically decreasing over the iterations. Therefore, it is indeed a descent-type algorithm, which is the first step toward establishing the global convergence. 

\begin{restatable}{proposition}{propositionlyapunov}\label{lemma: lyapunov}
	Let Assumptions~\ref{assum:geo} and~\ref{assum:lip} hold and define the potential function
	\begin{align}
		H(x,y'):= \Phi(x) +h(x) +\frac{7}{8}\|y^T(x,y')-y^*(x)\|^2. \label{eq: lyapunov}
	\end{align}
	Choose hyperparameters $\alpha=\frac{1}{L}$, $\beta \le \frac{1}{2}(L_{\Phi}+\Gamma+\kappa^2)^{-1}=\mathcal{O}(\kappa^{-3})$, $\eta=\frac{\sqrt{\kappa}-1}{\sqrt{\kappa}+1}$ and $T\ge\mathcal{O}(\sqrt\kappa\ln\kappa)$. Then, the parameter sequence $\{x_k\}_k$ generated by \Cref{alg:main_deter} satisfies, for all $k=1,2,...,$ %$\beta\le \frac{1}{2}\min \{L_\Phi, \Gamma\}$
\begin{align}
H(x_{k+1}&, y_{k+1}) \le H(x_k,y_k)- \frac{1}{4\beta} \|x_{k+1}-x_k\|^2 \nonumber\\
&- \frac{1}{8} \Big(\|y_{k+1}-y^*(x_k)\|^2 + \|y_{k+2}-y^*(x_{k+1})\|^2 \Big). \nonumber
\end{align}
\end{restatable}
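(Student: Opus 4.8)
The plan is to decompose the one-step change of the potential function $H$ into three pieces — the change in $\Phi+h$ from the proximal gradient step on $x$, the change in the squared distance term $\|y^T(x,y')-y^*(x)\|^2$, and the coupling error introduced by using the inexact gradient $\widehat\nabla\Phi(x_k)$ — and then show the descent in the $x$-variable dominates all the error terms, provided the lower-level solver runs long enough ($T\ge \mathcal{O}(\sqrt\kappa\ln\kappa)$) and the stepsize $\beta$ is small enough.

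\textbf{Step 1: proximal gradient descent in $x$.} Using $L_\Phi$-smoothness of $\Phi$ from \Cref{le:aidhy} and the optimality condition of the proximal update $x_{k+1}\in\prox{\beta h}(x_k-\beta\widehat\nabla\Phi(x_k))$, I would derive the standard inexact descent inequality
\begin{align}
(\Phi+h)(x_{k+1}) \le (\Phi+h)(x_k) - \Big(\frac{1}{\beta}-\frac{L_\Phi}{2}\Big)\|x_{k+1}-x_k\|^2 + \langle \nabla\Phi(x_k)-\widehat\nabla\Phi(x_k), x_{k+1}-x_k\rangle. \nonumber
\end{align}
The cross term is then split by Young's inequality into $\frac{c}{2}\|x_{k+1}-x_k\|^2 + \frac{1}{2c}\|\widehat\nabla\Phi(x_k)-\nabla\Phi(x_k)\|^2$, and the gradient-error term is bounded by $\frac{\Gamma}{2c}\|y_{k+1}-y^*(x_k)\|^2$ via \Cref{le:aidhy}. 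Choosing $c$ on the order of $\kappa^2$ (consistent with the stated $\beta\le\frac12(L_\Phi+\Gamma+\kappa^2)^{-1}$) keeps $\frac{1}{\beta}-\frac{L_\Phi}{2}-\frac{c}{2}\ge \frac{1}{2\beta}$, leaving a clean $-\frac{1}{2\beta}\|x_{k+1}-x_k\|^2$ plus a controlled multiple of $\|y_{k+1}-y^*(x_k)\|^2$.

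\textbf{Step 2: controlling the lower-level tracking error.} I need to bound $\|y^T(x_{k+1},y_{k+1})-y^*(x_{k+1})\|^2$ in terms of $\|y^T(x_k,y_k)-y^*(x_k)\|^2$ and $\|x_{k+1}-x_k\|^2$. Since $y^T(x,\cdot)$ applies $T$ Nesterov accelerated gradient steps to the $\mu$-strongly convex, $L$-smooth function $g(x,\cdot)$, the contraction factor after $T$ steps is $(1-1/\sqrt\kappa)^{T}$ (up to the usual polynomial-in-$\kappa$ prefactor of accelerated methods), so $T\ge\mathcal{O}(\sqrt\kappa\ln\kappa)$ makes this factor as small as a small constant, say $\le \frac14$. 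Combining the contraction with respect to the lower-level initialization (here $y_{k+1}=y^T(x_k,y_k)$) with the $\kappa$-Lipschitz continuity of $y^*(\cdot)$ and a Lipschitz bound on $y^T(\cdot,y')$ in its first argument, I obtain an inequality of the form $\|y_{k+2}-y^*(x_{k+1})\|^2 \le \frac14\|y_{k+1}-y^*(x_k)\|^2 + C\kappa^{O(1)}\|x_{k+1}-x_k\|^2$ after squaring and applying Young's inequality. Multiplying by $\tfrac78$ and adding to Step 1, the $\|x_{k+1}-x_k\|^2$ coefficient stays negative because $\beta=\mathcal{O}(\kappa^{-3})$ is small enough to absorb the $C\kappa^{O(1)}$ constant, and the $\|y_{k+1}-y^*(x_k)\|^2$ terms can be arranged so that a $-\frac18\|y_{k+1}-y^*(x_k)\|^2$ and a $-\frac18\|y_{k+2}-y^*(x_{k+1})\|^2$ survive (the latter comes from keeping part of the new distance term on the left while bounding it by its own contraction).

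\textbf{Step 3: assembling and the main obstacle.} Adding the bounds from Steps 1 and 2 and collecting coefficients yields exactly the claimed inequality. \textbf{I expect the main obstacle} to be Step 2 — specifically, getting a clean, tight contraction statement for the $T$-step Nesterov iterate $y^T(x,y')$ that is simultaneously (i) contractive in the lower-level initialization with the accelerated rate $(1-1/\sqrt\kappa)^T$ and (ii) Lipschitz in $x$ with a modest (polynomial in $\kappa$) constant, so that the $x$-perturbation term in Step 2 is genuinely dominated by the $\frac{1}{4\beta}\|x_{k+1}-x_k\|^2$ slack rather than overwhelming it. This likely requires a separate lemma analyzing the Nesterov recursion (two-sequence form) and carefully tracking how the momentum variable $u_t$ and the iterate $y^t$ depend on $x$ through $\nabla_y g(x,\cdot)$; the bookkeeping for why the specific constant $\tfrac78$ (rather than, say, $1$) makes the two negative $\|y-y^*\|^2$ terms come out with coefficient exactly $\tfrac18$ is the delicate part, and I would reverse-engineer the contraction threshold on $T$ from that requirement. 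The $x$-side (Step 1) is routine inexact proximal-gradient analysis.
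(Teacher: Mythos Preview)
Your three-step skeleton matches the paper's proof, but two points deserve correction.

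\emph{Step 1, constants.} Since $h$ may be nonconvex, the prox optimality (comparing the proximal objective at $x_{k+1}$ versus $x_k$) only yields the coefficient $\tfrac{1}{2\beta}$, not $\tfrac{1}{\beta}$, in front of $\|x_{k+1}-x_k\|^2$. The paper then takes $c=\Gamma$ (not $c\sim\kappa^2$) in Young's inequality, which produces exactly the coefficient $\tfrac12$ on $\|y_{k+1}-y^*(x_k)\|^2$; this is what makes the final $\tfrac78$/$\tfrac18$ bookkeeping come out cleanly. With $\beta\le\tfrac12(L_\Phi+\Gamma+\kappa^2)^{-1}$ one still has $\tfrac{1}{2\beta}-\tfrac{L_\Phi}{2}-\tfrac{\Gamma}{2}-\tfrac{\kappa^2}{4}\ge \tfrac{1}{4\beta}$.

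\emph{Step 2, the ``main obstacle'' is a phantom.} You do \emph{not} need any Lipschitz bound on $y^T(\cdot,y')$ in its first argument here. Since $y_{k+2}=y^T(x_{k+1},y_{k+1})$ is obtained by running Nesterov on $g(x_{k+1},\cdot)$ from the initial point $y_{k+1}$, the accelerated contraction gives directly
\[
\|y_{k+2}-y^*(x_{k+1})\|^2 \le (1+\kappa)(1-\kappa^{-1/2})^T\,\|y_{k+1}-y^*(x_{k+1})\|^2,
\]
and then one line of triangle inequality, $\|y_{k+1}-y^*(x_{k+1})\|\le \|y_{k+1}-y^*(x_k)\|+\kappa\|x_{k+1}-x_k\|$, plus $T\ge\mathcal O(\sqrt\kappa\ln\kappa)$ to make $2(1+\kappa)(1-\kappa^{-1/2})^T\le\tfrac14$, yields exactly
\[
\|y_{k+2}-y^*(x_{k+1})\|^2 \le \tfrac14\|y_{k+1}-y^*(x_k)\|^2+\tfrac{\kappa^2}{4}\|x_{k+1}-x_k\|^2.
\]
No separate lemma on the two-sequence Nesterov recursion is required for this proposition (that Lipschitz-in-$x$ analysis of $y^T$ does appear later in the paper, for bounding $\partial H$, but not here). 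Adding this to Step~1 and subtracting $\tfrac78\|y_{k+1}-y^*(x_k)\|^2$ from both sides gives the claim immediately.
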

To elaborate, the potential function $H$ consists of two components: the upper-level objective function $\Phi(x)+h(x)$ and a regularization term $\|y^T(x,y')-y^*(x)\|^2$ that tracks the optimality gap of the lower-level optimization. Hence, the potential function $H$ fully characterizes the optimization goal of the entire bi-level optimization. Intuitively, if $\{x_k\}_k$ converges to a certain critical point $x^*$ and $y^T(x_k,y_k)$ converges to $y^*(x^*)$, then it can be seen that $H(x_k,y_k)$ will converge to the local optimum $(\Phi+h)(x^*)$. 

Based on the above characterization of the potential function, we obtain the following global convergence result of proximal BiO-AIDm in general regularized nonconvex optimization.
\begin{restatable}{theorem}{theorema}\label{thm: 1}
	Under the same conditions as those of \Cref{lemma: lyapunov}, the parameter sequence $\{x_k, y_k\}_k$ generated by \Cref{alg:main_deter} satisfies the following properties.
	\begin{enumerate}[leftmargin=*,topsep=0pt,itemsep=.5mm]
		\item $\|x_{k+1} - x_k \| \overset{k}{\to} 0$, % Moreover, for any fixed $T$ \red{(re ove)}, 
		$\|y_{k+1} - y^*(x_k)\| \overset{k}{\to} 0$;
		\item The function value sequence $\{(\Phi+h)(x_k)\}_k$ converges to a finite limit $H^*>-\infty$;
		\item The sequence $\{(x_k,y_k)\}_k$ is bounded and has a compact set of limit points. Moreover, $(\Phi+h)(x^*)\equiv H^*$ for any limit point $x^*$ of $\{x_k\}_k$;
		\item Every limit point $x^*$ of $\{x_k\}_k$ is a critical point of the upper-level function $(\Phi+h)(x)$.
		%\item \red{If $h$ is convex (Corollary 1) To achieve $\|G(x)\|\le\epsilon$, it suffices that $K\ge\mathcal{O}(\kappa^3)$, $T\ge\mathcal{O}(\kappa)$, so the computational complexity has the order of $\mathcal{O}(KT)\ge\mathcal{O}(\kappa^4\epsilon^{-2})$.}
		%\item \red{If $h$ is convex (Momentum Corollary 2 with defined momentum expression Since it only differs in $T$.)}
	\end{enumerate}
\end{restatable}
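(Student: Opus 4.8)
The plan is to derive all four properties from the descent inequality in \Cref{lemma: lyapunov}, using a standard telescoping argument followed by a subsequential-limit analysis with the closedness of the limiting subdifferential.

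\textbf{Steps 1 and 2 (summable increments and convergent values).} First I would telescope the inequality of \Cref{lemma: lyapunov} from $k=1$ to $N$. Since $(\Phi+h)$ is bounded below (Assumption~\ref{assum:geo}.3) and $\|y^T(x,y')-y^*(x)\|^2\ge 0$, the potential $H$ is bounded below, so the telescoped sum gives $\sum_{k} \big(\frac{1}{4\beta}\|x_{k+1}-x_k\|^2 + \frac{1}{8}\|y_{k+1}-y^*(x_k)\|^2\big) < \infty$. This immediately yields $\|x_{k+1}-x_k\|\to 0$ and $\|y_{k+1}-y^*(x_k)\|\to 0$, proving Property~1. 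For Property~2, note $H(x_k,y_k)$ is monotonically decreasing and bounded below, hence converges to some finite $H^*$; since $\|y_{k+1}-y^*(x_k)\|\to 0$ forces the regularization term $\frac{7}{8}\|y^T(x_k,y_k)-y^*(x_k)\|^2 = \frac{7}{8}\|y_{k+1}-y^*(x_k)\|^2\to 0$, we get $(\Phi+h)(x_k)\to H^*$ as well.

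\textbf{Step 3 (boundedness and limit-point structure).} Because $H(x_k,y_k)\le H(x_1,y_1)$ for all $k$ and $\|y^T(x_k,y_k)-y^*(x_k)\|^2\ge 0$, the iterates $x_k$ lie in a sub-level set of $(\Phi+h)$, which is bounded by Assumption~\ref{assum:geo}.3; hence $\{x_k\}_k$ is bounded. Then $y^*(x_k)$ is bounded since $y^*$ is $\kappa$-Lipschitz, and $y_k = y^{T}(x_{k-1},y_{k-1})$ stays close to $y^*(x_{k-1})$ by Property~1, so $\{(x_k,y_k)\}_k$ is bounded. Its set of limit points is therefore nonempty and compact (as a closed bounded subset of $\mathbb{R}^d\times\mathbb{R}^p$; closedness under $\|x_{k+1}-x_k\|\to 0$ is the usual argument). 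For any limit point $x^*=\lim_j x_{k_j}$, continuity of $\Phi$ (\Cref{le:aidhy}) together with lower semicontinuity of $h$ gives $(\Phi+h)(x^*)\le \liminf_j (\Phi+h)(x_{k_j}) = H^*$; the reverse inequality $(\Phi+h)(x^*)\ge H^*$ needs the upper semicontinuity of $h$ along the specific sequence, which one obtains from the proximal update (the standard trick: $h(x_{k_j+1})\le h(x^*) + \frac{1}{2\beta}\|x^*-x_{k_j}+\beta\widehat\nabla\Phi(x_{k_j})\|^2 - \frac{1}{2\beta}\|x_{k_j+1}-x_{k_j}+\beta\widehat\nabla\Phi(x_{k_j})\|^2$, then pass to the limit using Property~1), yielding $(\Phi+h)(x^*)=H^*$.

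\textbf{Step 4 (criticality).} From the proximal update $x_{k+1}\in\prox{\beta h}(x_k-\beta\widehat\nabla\Phi(x_k))$, the optimality condition reads $\frac{1}{\beta}(x_k - x_{k+1}) - \widehat\nabla\Phi(x_k) + \nabla\Phi(x_{k+1}) \in \partial(\Phi+h)(x_{k+1})$. Along a subsequence $x_{k_j}\to x^*$: the term $\frac{1}{\beta}(x_{k_j}-x_{k_j+1})\to 0$ by Property~1; $\nabla\Phi(x_{k_j+1})-\nabla\Phi(x^*)\to 0$ and $\widehat\nabla\Phi(x_{k_j})-\nabla\Phi(x^*)\to 0$ by $L_\Phi$-Lipschitzness of $\nabla\Phi$, the bound $\|\widehat\nabla\Phi(x_k)-\nabla\Phi(x_k)\|^2\le\Gamma\|y_{k+1}-y^*(x_k)\|^2\to 0$ from \Cref{le:aidhy}, and Property~1; so the left-hand side tends to $\zero$. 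Combined with $x_{k_j+1}\to x^*$ (since $\|x_{k_j+1}-x_{k_j}\|\to 0$) and $(\Phi+h)(x_{k_j+1})\to H^* = (\Phi+h)(x^*)$ from Steps~2--3, the closedness property of the limiting subdifferential (Definition~\ref{def:sub}) gives $\zero\in\partial(\Phi+h)(x^*)$, i.e. $x^*$ is critical.

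\textbf{Main obstacle.} The delicate point is Step~3's claim that $(\Phi+h)(x^*)=H^*$ exactly (not merely $\le$): since $h$ is only lower semicontinuous, $h(x_{k_j})$ need not converge to $h(x^*)$, and one must instead exploit the variational characterization of the proximal step to control $\limsup_j h(x_{k_j+1})$. This same identity is also precisely what feeds the ``$(x_{k_j+1}, (\Phi+h)(x_{k_j+1}))\to(x^*,(\Phi+h)(x^*))$'' hypothesis required to invoke the graphical closure of $\partial$ in Step~4, so getting it right is the crux of the whole argument.
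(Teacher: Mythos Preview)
Your proposal is correct and follows essentially the same route as the paper: telescope \Cref{lemma: lyapunov} to obtain summability and Property~1; deduce convergence of $H(x_k,y_k)$ and hence of $(\Phi+h)(x_k)$ to $H^*$; bound $\{x_k\}$ via the sub-level set and $\{y_k\}$ via $y^*(x_{k-1})$; use the variational inequality of the proximal step to force $\limsup_j h(x_{k(j)})\le h(x^*)$ and pin down $(\Phi+h)(x^*)=H^*$; and finally invoke the graphical closure of the limiting subdifferential with the element $\frac{1}{\beta}(x_k-x_{k+1})+\nabla\Phi(x_{k+1})-\widehat\nabla\Phi(x_k)\in\partial(\Phi+h)(x_{k+1})$ tending to $\zero$. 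The only cosmetic difference is that the paper applies the proximal inequality at the index $k(j)$ (comparing against $x_{k(j)-1}$) whereas you apply it at $k_j+1$ (comparing against $x_{k_j}$); both work since $\|x_{k+1}-x_k\|\to 0$.
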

\Cref{thm: 1} provides a comprehensive characterization of the global convergence properties of proximal BiO-AIDm in regularized nonconvex and nonsmooth bi-level optimization. Specifically, item 1 shows that the parameter sequence $\{x_k\}_k$ is asymptotically stable, and $y_{k+1}$ asymptotically converges to the corresponding minimizer $y^*(x_k)$ of the lower-level objective function $g(x_k, \cdot)$. In particular, in the unregularized case (i.e., $h=0$), this result reduces to the existing understanding that the gradient norm $\|\nabla \Phi(x)\|$ converges to zero \citep{ji2020bilevel,ghadimi2018approximation,hong2020two}, which does not imply the convergence of the parameter sequence. Item 2 shows that the function value sequence converges to a finite limit, which is also the limit of the potential function value sequence $\{H(x_k,y_k)\}_k$. Moreover, items 3 and 4 show that the parameter sequence $\{x_k\}_k$ converges to only critical points of the objective function, and these limit points are in a flat region where the corresponding function value is the constant $H^*$. To summarize, \Cref{thm: 1} formally proves that proximal BiO-AIDm will eventually converge to critical points in nonsmooth and nonconvex bi-level optimization. 

{In addition to the above global convergence result, \Cref{lemma: lyapunov} can be further leveraged to characterize the computation complexity of proximal BiO-AIDm for finding a critical point in regularized nonconvex bi-level optimization. Specifically, when the regularizer $h$ is convex, we can define the following proximal gradient mapping associated with the objective function $\Phi(x)+h(x)$. 
%Moreover, to derive the global convergence rate and computational complexity (defined as the total number of gradients, Hessian matrices and proximal mappings) of Algorithm \ref{alg:main_deter}, 
\begin{align}
    G(x)=\frac{1}{\beta}\Big(x-\prox{\beta h} \big(x- \beta \nabla \Phi(x)\big)\Big). \label{Phi_pg}
\end{align}
The proximal gradient mapping is a standard metric for evaluating the optimality of regularized nonconvex optimization problems \citep{Nesterov2014}. It can be shown that $x$ is a critical point of $\Phi(x)+h(x)$ if and only if $G(x)=\zero$, and it reduces to the normal gradient in the unregularized case. Hence, we define the \textbf{convergence criterion} as finding a near-critical point $x$ that satisfies $\|G(x)\|\le \epsilon$ for some pre-determined accuracy $\epsilon>0$. We obtain the following global convergence rate and complexity of proximal BiO-AIDm.

\begin{restatable}{corollary}{coroalg}\label{coro_alg1}
Suppose $h$ is convex and the conditions of \Cref{lemma: lyapunov} hold. Then, the sequence $\{x_k\}_k$ generated by \Cref{alg:main_deter} satisfies the following convergence rate.
\begin{align}
    \min_{0\le k\le K}\|G(x_k)\| \le \sqrt{\frac{32}{K\beta}\big(H(x_0) - \inf_x (\Phi+g)(x)\big)}. \label{eq: globalrate}
\end{align}
Moreover, to achieve $\min_{0\le k\le K}\|G(x_k)\|\le\epsilon$, {we run the algorithm with $K=\mathcal{O}(\beta^{-1}\epsilon^{-2})=\mathcal{O}(\kappa^{3}\epsilon^{-2})$ outer iterations and $T=\mathcal{O}(\sqrt{\kappa}\ln\kappa)$ inner iterations, and} the overall computation complexity is $KT=\mathcal{O}(\kappa^{3.5}(\ln\kappa)\epsilon^{-2})$. 
\end{restatable}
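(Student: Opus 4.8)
The plan is to feed the per-iteration potential decrease of \Cref{lemma: lyapunov} into a bound that controls the true proximal gradient mapping $G(x_k)$ from \eqref{Phi_pg} by exactly the two quantities $\|x_{k+1}-x_k\|$ and $\|y_{k+1}-y^*(x_k)\|$ that already appear on the right-hand side of that decrease, and then telescope. First I would telescope the inequality in \Cref{lemma: lyapunov} over the iterations. Since the extra term in $H$ is nonnegative, $H(x,y')\ge (\Phi+h)(x)\ge \inf_x(\Phi+h)(x)$, so the terminal potential is bounded below by $\inf_x(\Phi+h)(x)$; dropping the nonnegative $\|y_{k+2}-y^*(x_{k+1})\|^2$ contributions, this gives
\[
\sum_{k}\Big(\tfrac{1}{4\beta}\|x_{k+1}-x_k\|^2+\tfrac{1}{8}\|y_{k+1}-y^*(x_k)\|^2\Big)\le H(x_0,y_0)-\inf_x(\Phi+h)(x).
\]

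Next I would relate $G(x_k)$ to the realized step. The update $x_{k+1}\in\prox{\beta h}(x_k-\beta\widehat\nabla\Phi(x_k))$ means that $\widehat G(x_k):=\tfrac{1}{\beta}(x_k-x_{k+1})$ is precisely the inexact proximal gradient mapping obtained by replacing $\nabla\Phi$ with $\widehat\nabla\Phi$ in \eqref{Phi_pg}. Because $h$ is convex, $\prox{\beta h}$ is nonexpansive, hence $\|G(x_k)-\widehat G(x_k)\|\le\|\nabla\Phi(x_k)-\widehat\nabla\Phi(x_k)\|$, and \Cref{le:aidhy} bounds the right-hand side by $\sqrt{\Gamma}\,\|y_{k+1}-y^*(x_k)\|$. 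Combining these with $(a+b)^2\le 2a^2+2b^2$ yields the key per-step estimate $\|G(x_k)\|^2\le \tfrac{2}{\beta^2}\|x_{k+1}-x_k\|^2+2\Gamma\|y_{k+1}-y^*(x_k)\|^2$.

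Finally I would use the stepsize choice: $\beta\le\tfrac12(L_\Phi+\Gamma+\kappa^2)^{-1}$ implies $\Gamma\beta\le\tfrac12$, so both terms in the per-step estimate are bounded by an absolute constant (which can be taken as $32$) times $\tfrac{1}{\beta}\big(\tfrac{1}{4\beta}\|x_{k+1}-x_k\|^2+\tfrac{1}{8}\|y_{k+1}-y^*(x_k)\|^2\big)$; summing over $k$, invoking the telescoped bound, and passing to the minimum over $0\le k\le K$ gives $\min_{0\le k\le K}\|G(x_k)\|^2\le\tfrac{32}{K\beta}\big(H(x_0,y_0)-\inf_x(\Phi+h)(x)\big)$, i.e.\ \eqref{eq: globalrate}. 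Setting the right-hand side to $\epsilon^2$ and using $\beta=\mathcal O(\kappa^{-3})$ forces $K=\mathcal O(\beta^{-1}\epsilon^{-2})=\mathcal O(\kappa^3\epsilon^{-2})$; since each outer iteration runs $T=\mathcal O(\sqrt{\kappa}\ln\kappa)$ inner Nesterov steps plus $\mathcal O(1)$ gradient, Hessian-vector and proximal evaluations, the overall complexity is $KT=\mathcal O(\kappa^{3.5}(\ln\kappa)\epsilon^{-2})$. The main obstacle is the middle step: correctly passing from the \emph{exact} proximal gradient mapping $G(x_k)$ (the object in the convergence criterion) to the \emph{realized} step $x_{k+1}$, which is driven by the AID-estimated gradient, and arranging that the AID estimation error of \Cref{le:aidhy} is absorbed by exactly the lower-level optimality-gap term $\|y_{k+1}-y^*(x_k)\|^2$ that \Cref{lemma: lyapunov} already debits — verifying that the stepsize bound makes every constant line up is the only delicate bookkeeping.
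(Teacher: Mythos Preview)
Your proposal is correct and follows essentially the same strategy as the paper: use nonexpansiveness of $\prox{\beta h}$ (from convexity of $h$) together with the AID error bound of \Cref{le:aidhy} to control the proximal gradient mapping by $\|x_{k+1}-x_k\|$ and $\|y_{k+1}-y^*(x_k)\|$, then telescope \Cref{lemma: lyapunov} and absorb the constants via the stepsize choice. The only cosmetic difference is that the paper bounds $\|G(x_{k+1})\|$ rather than $\|G(x_k)\|$, which forces an additional $L_\Phi\|x_{k+1}-x_k\|$ term from $\|\nabla\Phi(x_{k+1})-\nabla\Phi(x_k)\|$; your indexing avoids that step and is slightly cleaner.
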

With the momentum accelerated AID scheme, proximal BiO-AIDm achieves a computation complexity $\mathcal{O}(\kappa^{3.5}\epsilon^{-2})$ in regularized nonsmooth and nonconvex bi-level optimization, which strictly improves the computation complexity $\mathcal{O}(\kappa^4\epsilon^{-2})$ of BiO-AID that only applies to smooth nonconvex bi-level optimization \citep{ji2020bilevel}. 
To the best of our knowledge, this is the first convergence rate and complexity result of momentum accelerated algorithm for solving regularized nonsmooth and nonconvex bi-level optimization problems. We note that another momentum accelerated bi-level optimization algorithm has been studied in \citep{ji2021lower}, which only applies to unregularized (strongly) convex bi-level optimization problems.

}

\section{Convergence Rates under Local Nonconvex Geometry}
In the previous section, we have proved that the optimization trajectory generated by proximal BiO-AIDm approaches a compact set of critical points. Hence, we are further motivated to exploit the local function geometry around the critical points to study its local convergence guarantees, which is the focus of this section. In particular, we consider a broad class of {\L}ojasiewicz-type geometry of nonconvex functions. 

\subsection{Local Kurdyka-{\L}ojasiewicz Geometry}
General nonconvex functions typically do not have a global geometry. However, they may have certain local geometry around the critical points that determines the local convergence rate of optimization algorithms. In particular, the Kurdyka-{\L}ojasiewicz (K{\L}) geometry characterizes a broad spectrum of local geometries of nonconvex functions \citep{Bolte2007,Bolte2014}, and it generalizes various conventional global geometries such as the strong convexity and Polyak-{\L}ojasiewicz geometry. Next, we formally introduce the K{\L} geometry.

\begin{definition}[\KL geometry, \cite{Bolte2014}]\label{def: KL}
	A proper and lower semi-continuous function $F$ is said to have the \KL geometry if for every compact set $\Omega\subset \mathrm{dom}F$ on which $F$ takes a constant value $F_\Omega \in \mathbb{R}$, there exist $\varepsilon, \lambda >0$ such that for all $\bar{x} \in \Omega$ and all $x\in \{z\in \mathbb{R}^m : \dist_\Omega(z)<\varepsilon, F_\Omega < F(z) <F_\Omega + \lambda\}$, the following condition holds:
	\begin{align}\label{eq: KL}
	\varphi' \left(F(x) - F_\Omega\right) \cdot \dist_{\partial F(x)}(\zero) \ge 1,
	\end{align}
	where $\varphi'$ is the derivative of $\varphi: [0,\lambda) \to \mathbb{R}_+$ that takes the form $\varphi(t) = \frac{c}{\theta} t^\theta$ for certain constant $c>0$ and \KL parameter $\theta\in (0,1]$, and $\dist_{\partial F(x)}(\zero) = \min_{u \in \partial F} \|u-\zero\|$ denotes the point-to-set distance.
\end{definition}

As an intuitive explanation, when function $F$ is differentiable, the \KL inequality in \cref{eq: KL} can be rewritten as $F(x)-F_{\Omega} \le \mathcal{O}(\|\nabla F(x)\|^{\frac{1}{1-\theta}})$, which can be viewed as a type of local gradient dominance condition and generalizes the Polyak-\L ojasiewicz (PL) condition (with parameter $\theta=\frac{1}{2}$) \citep{Lojasiewicz1963, Karimi2016}. 
In the existing literature, a large class of functions has been shown to have the local \KL geometry, e.g., sub-analytic functions, logarithm and exponential functions and semi-algebraic functions \citep{Bolte2014}.   
Moreover, the \KL geometry has been exploited to establish the convergence of many gradient-based algorithms in nonconvex optimization, e.g., gradient descent \citep{Attouch2009,Li2017}, accelerated gradient method \citep{Zhou-ijcai2020}, alternating minimization \citep{Bolte2014} and distributed gradient methods \citep{Zhou2016}.

\subsection{Convergence Rates of Proximal BiO-AIDm under \KL Geometry}
In this subsection, we obtain the following asymptotic {function value} convergence rates of the proximal BiO-AIDm algorithm under different parameter ranges of the \KL geometry. Throughout, 
%we define the universal constant \red{$C:=\big[c\max\big(\frac{48}{\beta},24\Gamma,24(5^{T+1}+\kappa^2)\big)\big]^{-1}>0$ (Hide this in the proof)}, where $c$ is the constant defined in \Cref{def: KL}, and 
we define $k_0\in\mathbb{N}^+$ to be a sufficiently large integer. {We also adopt the following mild assumption that $y^*(\cdot)$ is a sub-differentiable mapping.}

\begin{assum}\label{assum: H}
	Function $\|y^T(x,y)-y^*(x)\|^2$ is sub-differentiable, i.e., $\partial_{(x,y)}(\|y^T(x,y)-y^*(x)\|^2)\ne \emptyset$.
\end{assum}

\begin{restatable}{theorem}{theoremobjrate}\label{thm:objrate}
    {Let Assumptions~\ref{assum:geo}, \ref{assum:lip} and \ref{assum: H} hold and and assume that the potential function $H$ defined in \cref{eq: lyapunov} has \KL geometry. Then, under the same choices of hyper-parameters as those of \Cref{lemma: lyapunov}}, the potential function value sequence $\{H(x_k,y_k)\}_k$ converges to its limit $H^*$ at the following rates. 
    \begin{enumerate}[leftmargin=*,topsep=0pt,itemsep=.5mm]
        %\item If K\L~geometry holds with $\theta=1$, then $H(x_k,y_k)\downarrow H^*$ within {finite number of iterations};
        \item If K\L~geometry holds with $\theta\in\big(\frac{1}{2},1\big)$, then $H(x_k,y_k)\downarrow H^*$ {super-linearly} as
        \begin{align}
            H(x_k,y_k)-H^*\le {\mathcal{O}}\Big(\!-\!\Big(\frac{1}{2(1-\theta)}\Big)^{k-k_0}\Big), ~\forall k\ge k_0; \label{eq: superlinear_converge} %C^{\frac{1}{2\theta-1}}\exp\Big(-\Big(\frac{1}{2(1-\theta)}\Big)^{k-k_0}\Big), \quad\forall k\ge k_0;
        \end{align}
        \item If K\L~geometry holds with $\theta=\frac{1}{2}$, then $H(x_k,y_k)\downarrow H^*$ {linearly} as (for some constant $C>0$)
        \begin{align}
            H(x_k,y_k)-H^*\le {\mathcal{O}}\big((1+C)^{-(k-k_0)}\big),\quad \forall k\ge k_0;  \label{eq: linear_converge} %(1+C)^{-(k-k_0)}\big(H(x_{k_0})-H^*\big),\quad \forall k\ge k_0;  
        \end{align}
        \item If K\L~geometry holds with $\theta\in\big(0,\frac{1}{2}\big)$, then $H(x_k,y_k)\downarrow H^*$ sub-linearly as
        \begin{align}
            H(x_k,y_k)-H^*\le {\mathcal{O}}\big((k-k_0)^{-\frac{1}{1-2\theta}}\big),\quad \forall k\ge k_0, \label{eq: sublinear_converge} %\big(U(k-k_0)\big)^{-\frac{1}{1-2\theta}},\quad \forall k\ge k_0,
        \end{align}
        %where the constant $U=\min\big\{2^{-2(1-\theta)}C(1-2\theta), 1-2^{-(1-2\theta)} \big\}>0$.
    \end{enumerate}
\end{restatable}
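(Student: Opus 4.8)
The plan is to run the classical \L ojasiewicz recursion argument on the potential function. Write $r_k := H(x_k,y_k) - H^*$; by the monotonicity in \Cref{lemma: lyapunov} together with items 1--2 of \Cref{thm: 1} (which give $(\Phi+h)(x_k)\to H^*$ and $\|y_{k+1}-y^*(x_k)\|\to 0$, hence $H(x_k,y_k)\to H^*$), the sequence $\{r_k\}$ is nonincreasing with $r_k\downarrow 0$; if $r_k=0$ for some $k$ the trajectory is stationary and all three estimates are trivial, so assume $r_k>0$ throughout. Two quantitative ingredients drive the argument. First, the descent inequality of \Cref{lemma: lyapunov}, which after $(a+b+c)^2\le 3(a^2+b^2+c^2)$ yields a constant $c_1>0$ with
\begin{equation*}
r_k - r_{k+1} \ \ge\ c_1\,\big(\|x_{k+1}-x_k\| + \|y_{k+1}-y^*(x_k)\| + \|y_{k+2}-y^*(x_{k+1})\|\big)^2 \ =:\ c_1\,\Delta_k^2.
\end{equation*}
Second, a matching bound $\dist_{\partial H(z_{k+1})}(\zero)\le c_2\,\Delta_k$, where $z_k:=(x_k,y_k)$ and $c_2$ depends only on the fixed quantities $\kappa$ and $T$. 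This I would derive via a sum rule for $\partial H$: the optimality condition of the proximal update gives $\tfrac1\beta(x_k-x_{k+1})-\widehat\nabla\Phi(x_k)\in\partial h(x_{k+1})$, hence $\tfrac1\beta(x_k-x_{k+1})+\nabla\Phi(x_{k+1})-\widehat\nabla\Phi(x_k)\in\partial(\Phi+h)(x_{k+1})$ with $\|\nabla\Phi(x_{k+1})-\widehat\nabla\Phi(x_k)\|\le L_\Phi\|x_{k+1}-x_k\|+\sqrt\Gamma\,\|y_{k+1}-y^*(x_k)\|$ by \Cref{le:aidhy}; the remaining contribution is the (sub)gradient of $\tfrac78\|y^T(x,y)-y^*(x)\|^2$, which exists under \Cref{assum: H} and is controlled by $\mathcal{O}(2.5^{T+1}+\kappa)\,\|y_{k+2}-y^*(x_{k+1})\|$ plus lower-order iterate-difference terms, using the $\kappa$-Lipschitzness of $y^*$ and the finite, momentum-amplified dependence of the $T$-step map $y^T$ on its arguments.

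Next I would bring in the \KL geometry. By item 3 of \Cref{thm: 1}, the limit-point set $\Omega$ of $\{z_k\}$ is compact and $H\equiv H^*$ on it; since $\dist_\Omega(z_k)\to 0$ and $r_k\downarrow 0$, there is a $k_0$ such that $z_{k+1}$ lies in the \KL neighborhood $\{z:\dist_\Omega(z)<\varepsilon,\ H^*<H(z)<H^*+\lambda\}$ for all $k\ge k_0$. Applying \cref{eq: KL} with $\varphi(t)=\tfrac{c}{\theta}t^\theta$ gives $\dist_{\partial H(z_{k+1})}(\zero)\ge \tfrac1c\,r_{k+1}^{\,1-\theta}$. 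Chaining the three displays and eliminating $\Delta_k$ produces the one-step recursion
\begin{equation*}
r_k - r_{k+1}\ \ge\ c_3\, r_{k+1}^{\,2(1-\theta)},\qquad k\ge k_0,
\end{equation*}
for some $c_3>0$; the index bookkeeping closes because the descent at step $k$ controls exactly the two lower-level residuals that appear in the subdifferential bound at $z_{k+1}$. From here the three regimes follow from the standard calculus-on-sequences lemma (cf.\ \citep{Attouch2009,Li2017}): for $\theta=\tfrac12$ the recursion reads $r_k\ge(1+c_3)\,r_{k+1}$, giving linear decay $(1+c_3)^{-(k-k_0)}$; for $\theta\in(\tfrac12,1)$ the exponent $2(1-\theta)<1$ gives $r_{k+1}\le c_3^{-1/(2(1-\theta))}\,r_k^{1/(2(1-\theta))}$, and iterating this super-linear contraction yields the doubly-exponential rate governed by $\tfrac{1}{2(1-\theta)}>1$; for $\theta\in(0,\tfrac12)$ the exponent exceeds $1$, and the usual device of lower-bounding $r_{k+1}^{\,1-2(1-\theta)}-r_k^{\,1-2(1-\theta)}$ by a positive constant (splitting into the cases $r_k\le 2r_{k+1}$ and $r_k>2r_{k+1}$) and telescoping gives $r_k=\mathcal{O}\big((k-k_0)^{-1/(1-2\theta)}\big)$.

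I expect the main obstacle to be the subdifferential bound $\dist_{\partial H(z_{k+1})}(\zero)\le c_2\Delta_k$ rather than the recursion bookkeeping. The potential $H$ sums the nonsmooth $h$, the only-implicitly-defined smooth $\Phi$, and the tracking term $\|y^T(x,y)-y^*(x)\|^2$, so exhibiting an explicit element of $\partial H(z_{k+1})$ of controlled norm requires a careful sum rule, the AID error estimate of \Cref{le:aidhy} to handle $\widehat\nabla\Phi(x_k)-\nabla\Phi(x_{k+1})$, and \Cref{assum: H} together with a Lipschitz-type control of how perturbations in $(x,y)$ propagate through the $T$ Nesterov momentum steps defining $y^T$ --- which is where the $2.5^{T+1}$ factor comes from. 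A secondary point to handle carefully is the transition into the \KL neighborhood: one must combine the several convergence statements of \Cref{thm: 1} to ensure that $\dist_\Omega(z_{k+1})<\varepsilon$ and $H^*<H(z_{k+1})<H^*+\lambda$ hold simultaneously along the whole tail of the trajectory, so that \cref{eq: KL} is legitimately applicable at every $z_{k+1}$ with $k\ge k_0$.
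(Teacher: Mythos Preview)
Your proposal is correct and follows essentially the same route as the paper: combine the descent inequality of \Cref{lemma: lyapunov}, a subdifferential bound $\dist_{\partial H(z_{k+1})}(\zero)\le c_2\Delta_k$ (which the paper establishes as a separate lemma, via a Lipschitz estimate on the $T$-step momentum map $y^T$ that produces precisely the $2.5^{T+1}$ factor you anticipate), and the \KL inequality to obtain the recursion $r_k-r_{k+1}\ge c_3\,r_{k+1}^{2(1-\theta)}$. The three $\theta$-regimes are then resolved exactly as you sketch, including the $r_k\le 2r_{k+1}$ versus $r_k>2r_{k+1}$ case split for $\theta\in(0,\tfrac12)$.
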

Intuitively, a larger \KL parameter $\theta$ implies that the local geometry of the potential function $H$ is sharper, which implies an orderwise faster convergence rate as shown in \Cref{thm:objrate}. In particular, when the K\L~geometry holds with $\theta=\frac{1}{2}$, the proximal BiO-AIDm algorithm converges at a linear rate, which matches the convergence rate of bi-level optimization under the stronger geometry that both the upper and lower-level objective functions are strongly convex \citep{ghadimi2018approximation}. 
To the best of our knowledge, the above result provides the first {function value} converge rate analysis of proximal BiO-AIDm in the full spectrum of the nonconvex local \KL geometry.

\begin{figure*}
    \centering
     \begin{subfigure}[b]{0.235\textwidth}
        \centering 
        \includegraphics[width=1.13\textwidth]{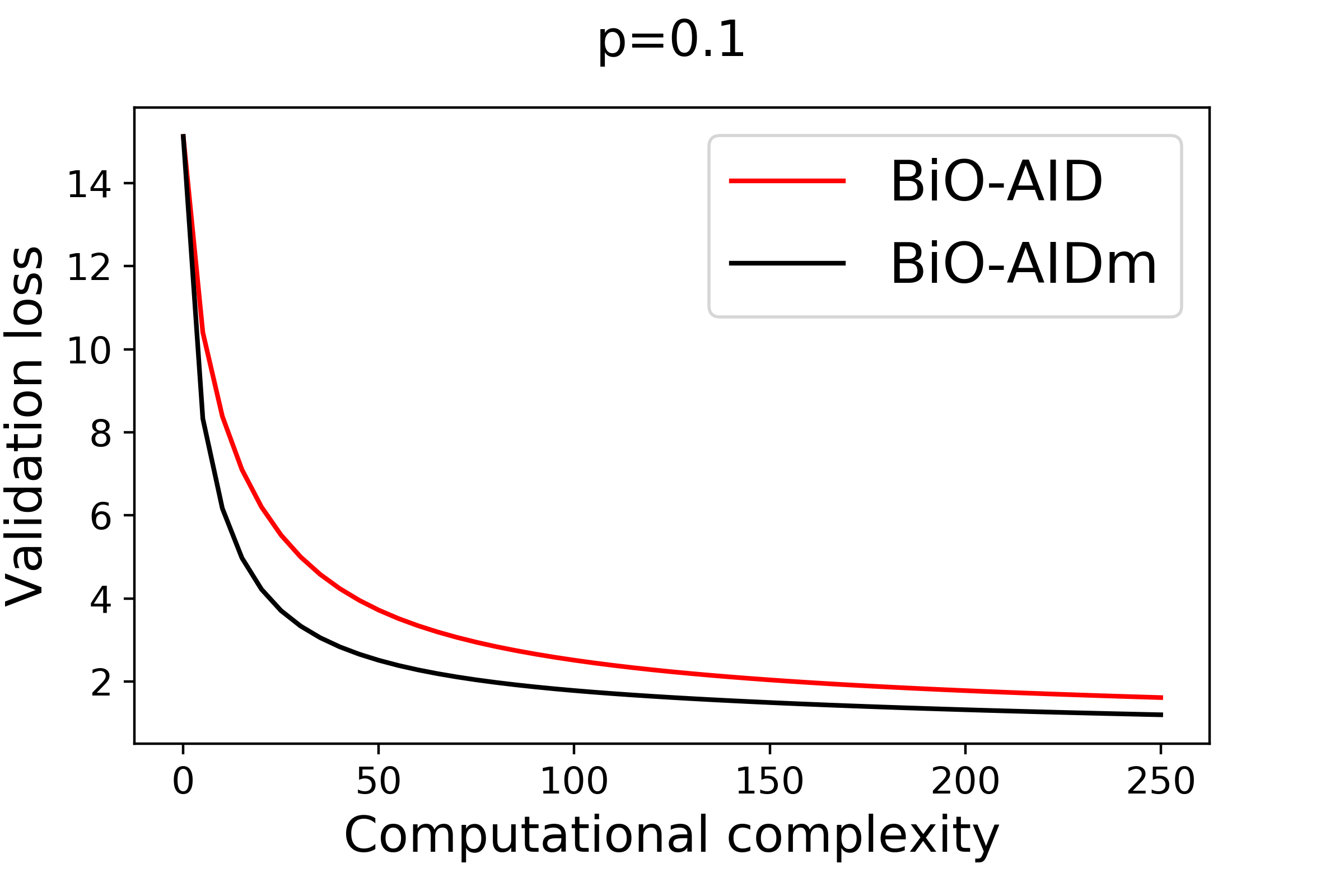}
    \end{subfigure}
    \begin{subfigure}[b]{0.235\textwidth}
        \centering 
        \includegraphics[width=1.13\textwidth]{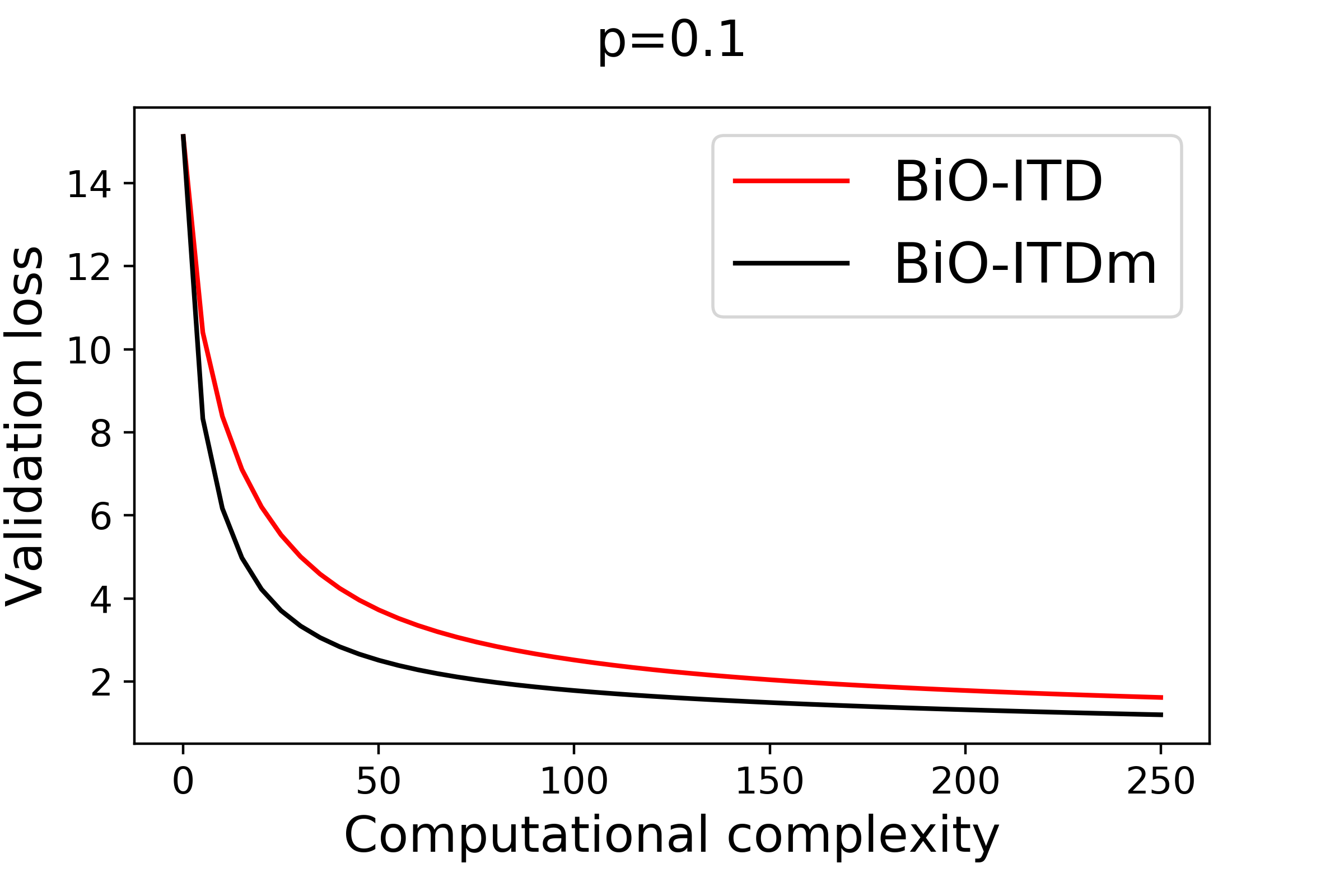}
    \end{subfigure}
    \begin{subfigure}[b]{0.235\textwidth}
        \centering 
        \includegraphics[width=1.13\textwidth]{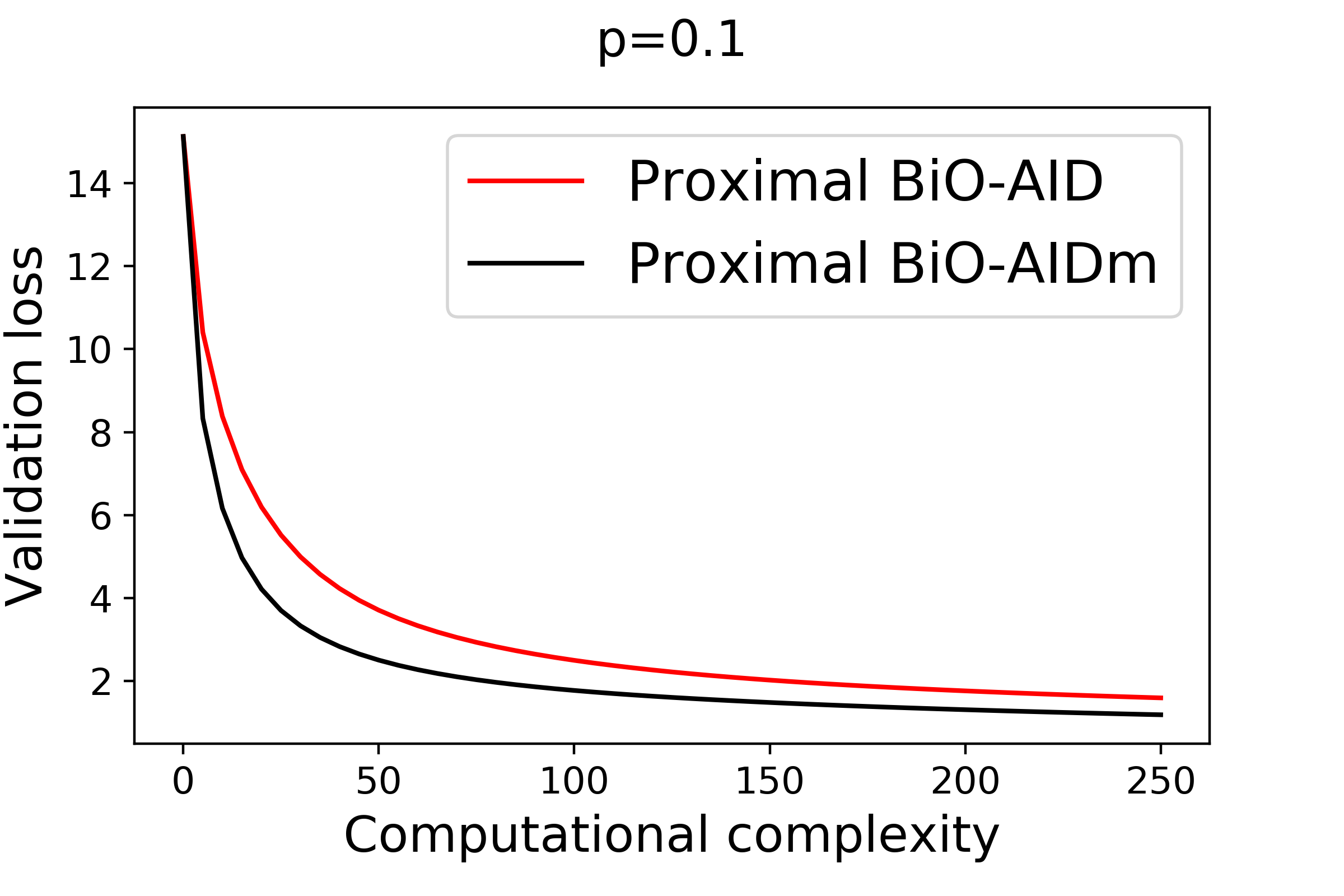}
    \end{subfigure}
    \begin{subfigure}[b]{0.235\textwidth}
        \centering 
        \includegraphics[width=1.13\textwidth]{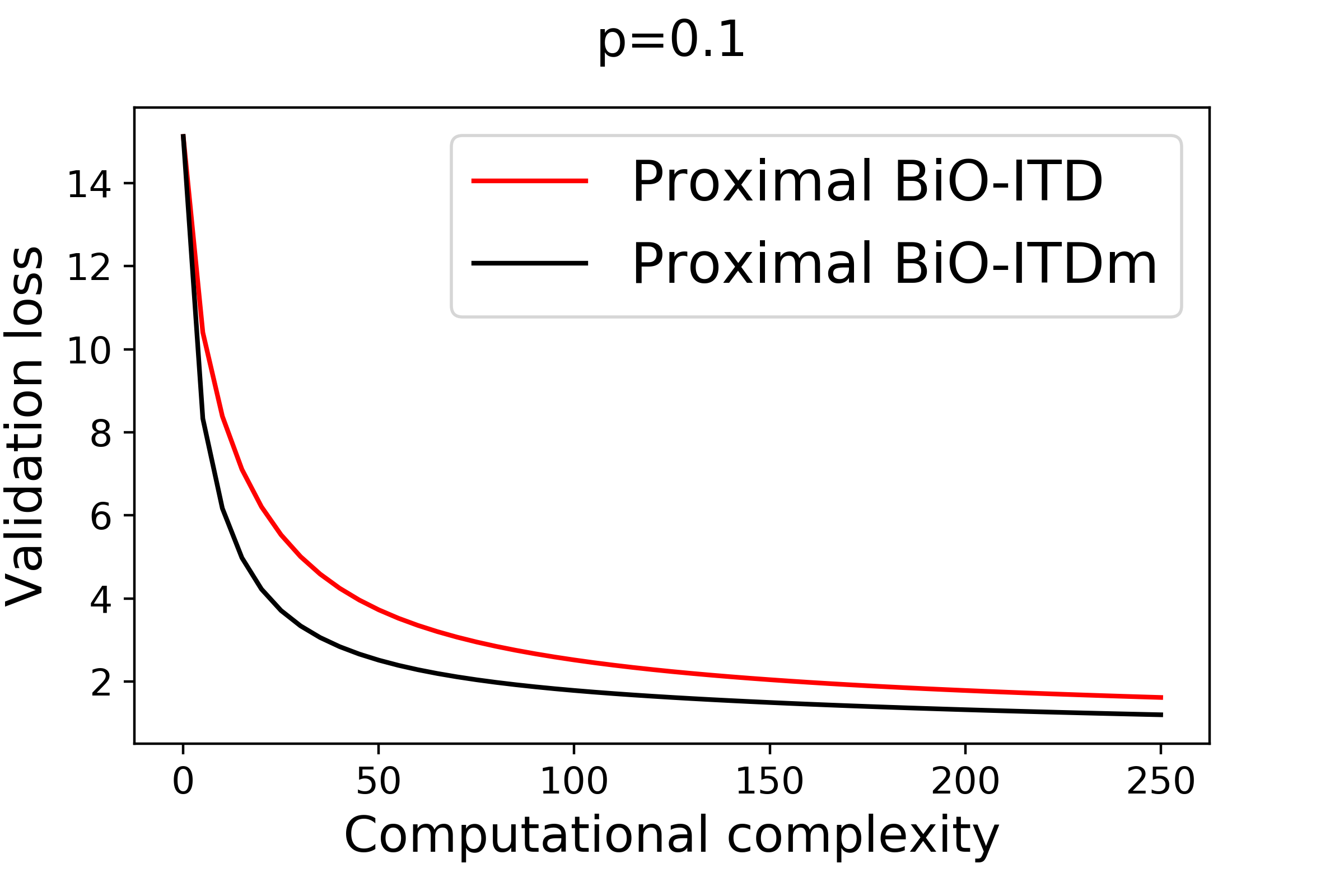}
    \end{subfigure}\\
    
     \begin{subfigure}[b]{0.235\textwidth}
        \centering 
        \includegraphics[width=1.13\textwidth]{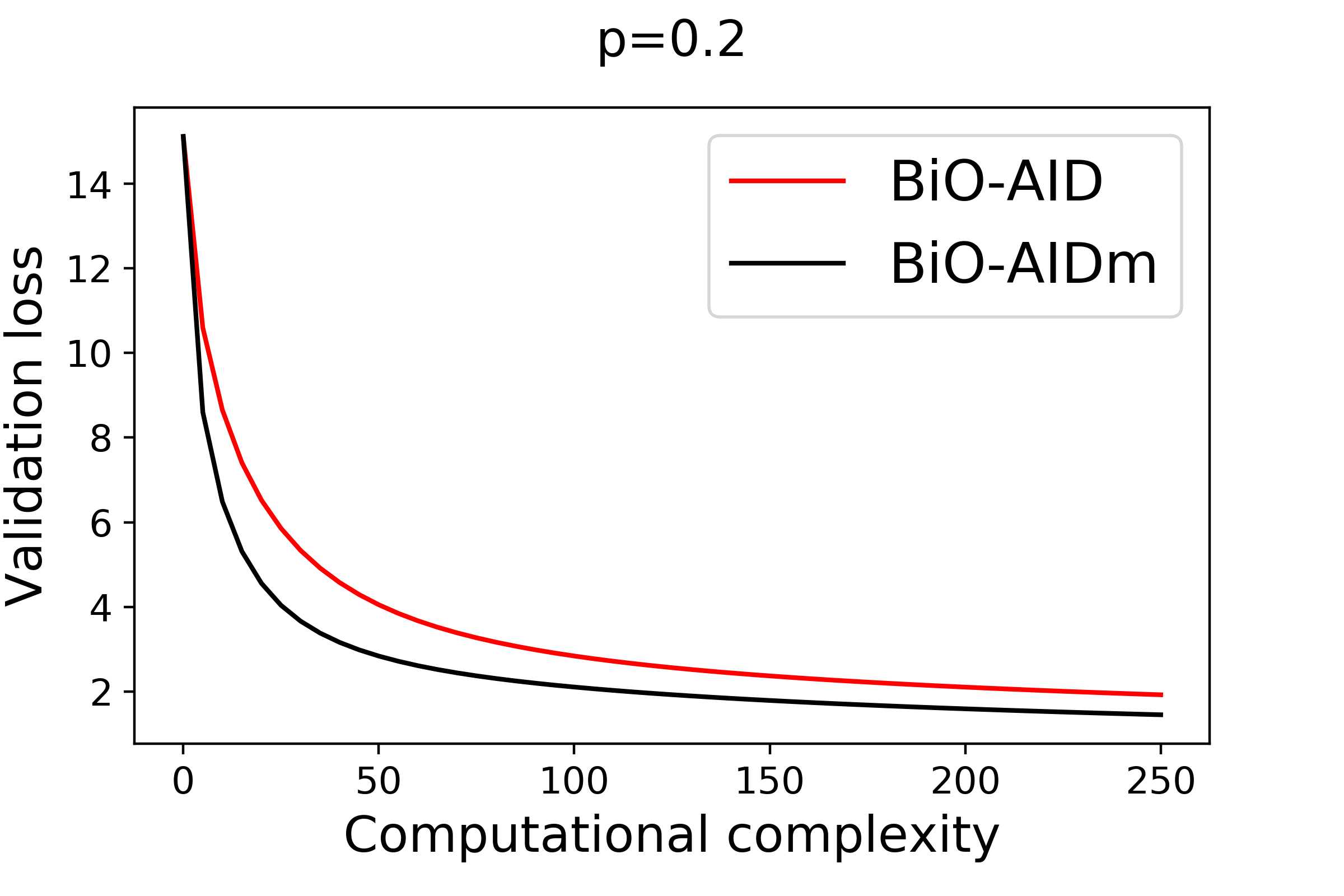}
    \end{subfigure}
    \begin{subfigure}[b]{0.235\textwidth}
        \centering 
        \includegraphics[width=1.13\textwidth]{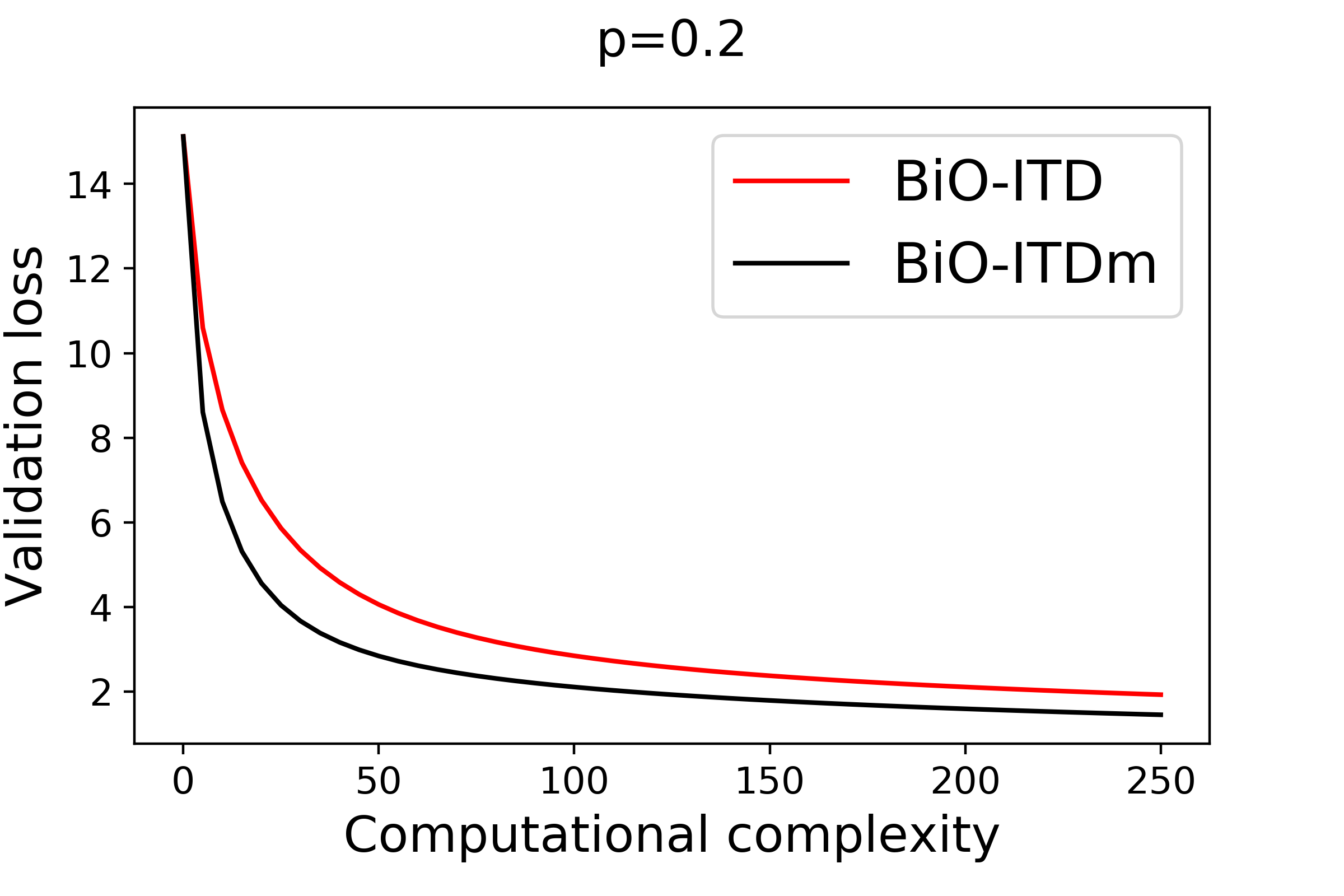}
    \end{subfigure}
    \begin{subfigure}[b]{0.235\textwidth}
        \centering 
        \includegraphics[width=1.13\textwidth]{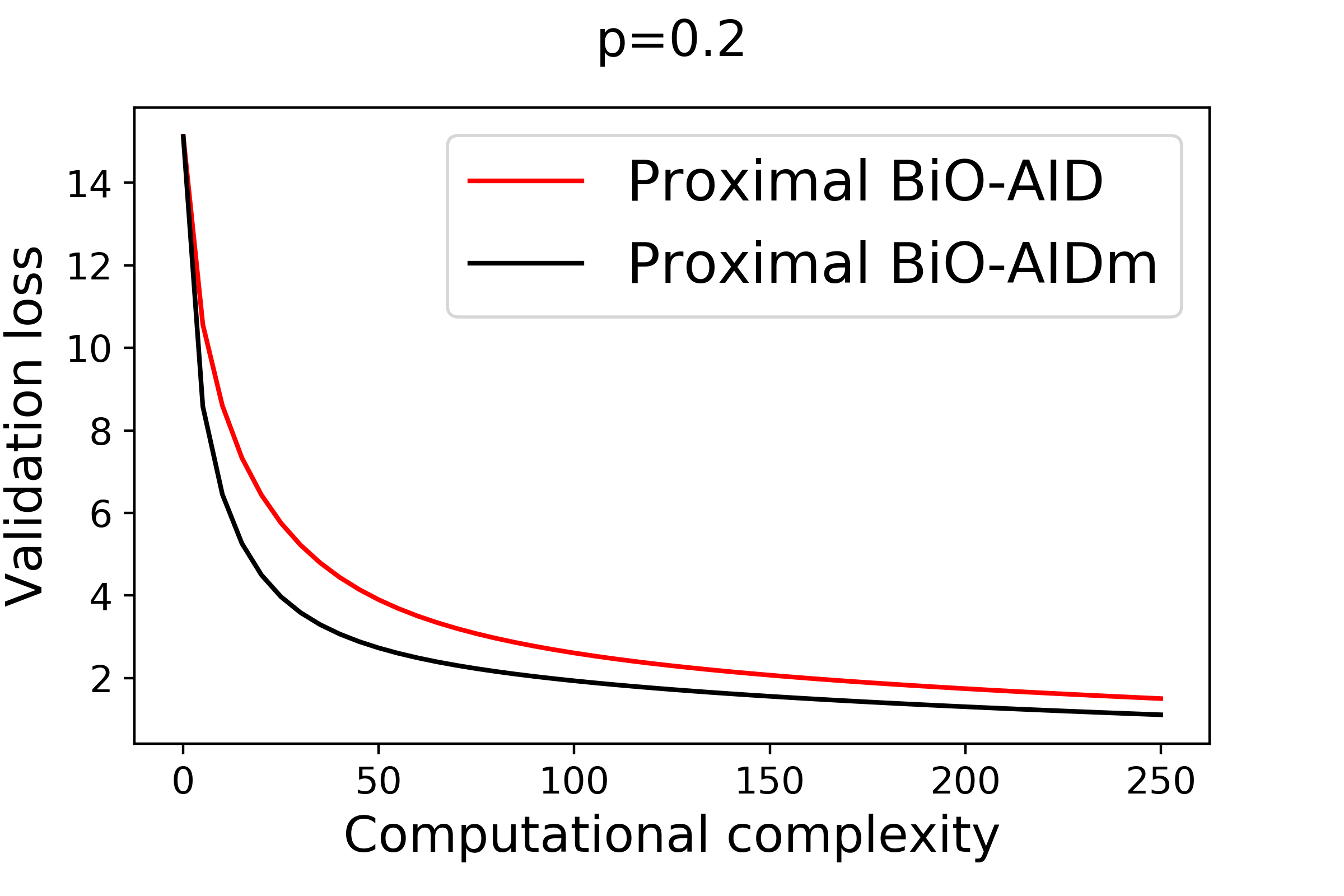}
    \end{subfigure}
    \begin{subfigure}[b]{0.235\textwidth}
        \centering 
        \includegraphics[width=1.13\textwidth]{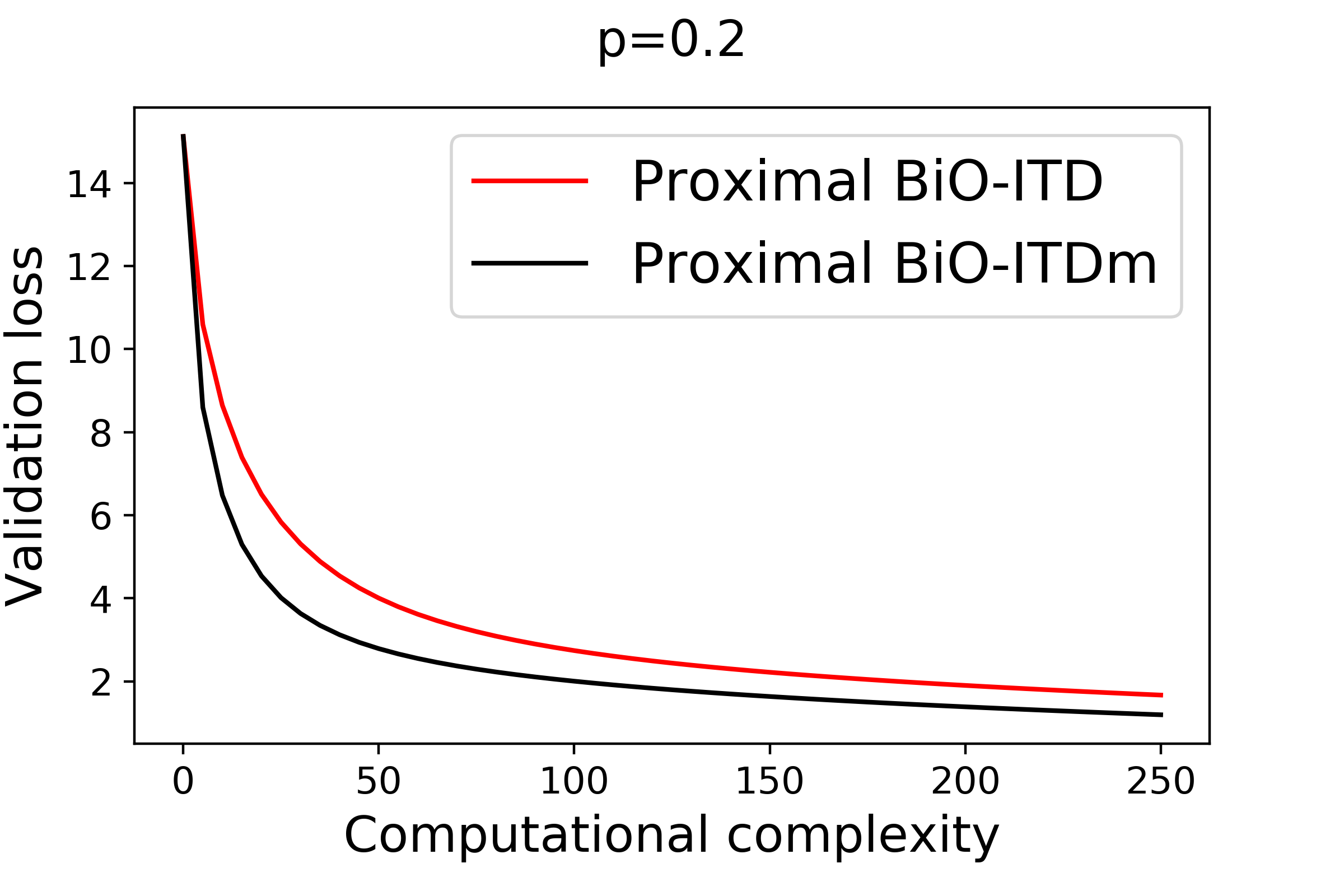}
    \end{subfigure}\\
    
    \begin{subfigure}[b]{0.235\textwidth}
        \centering 
        \includegraphics[width=1.13\textwidth]{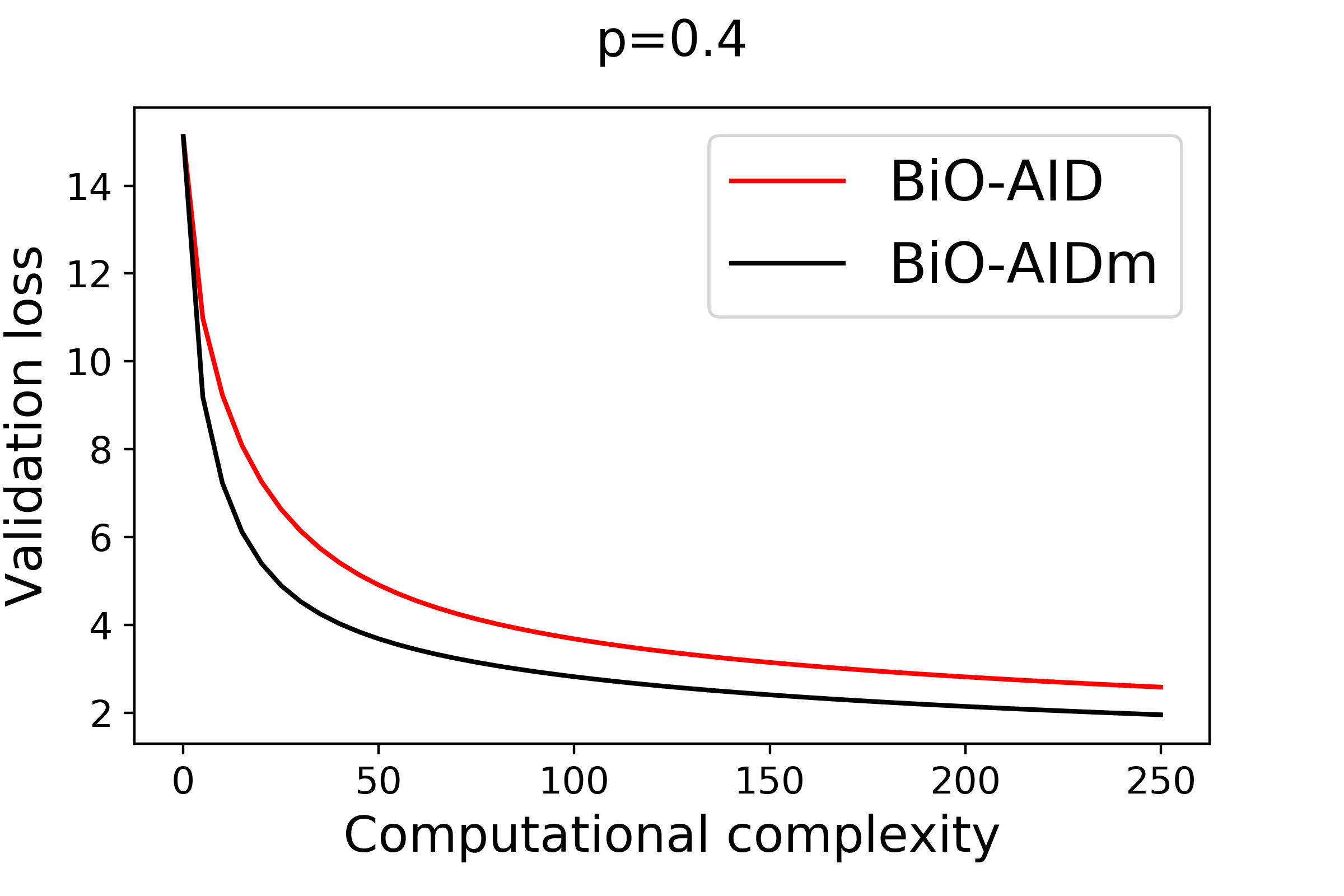}
    \end{subfigure}
    \begin{subfigure}[b]{0.235\textwidth}
        \centering 
        \includegraphics[width=1.13\textwidth]{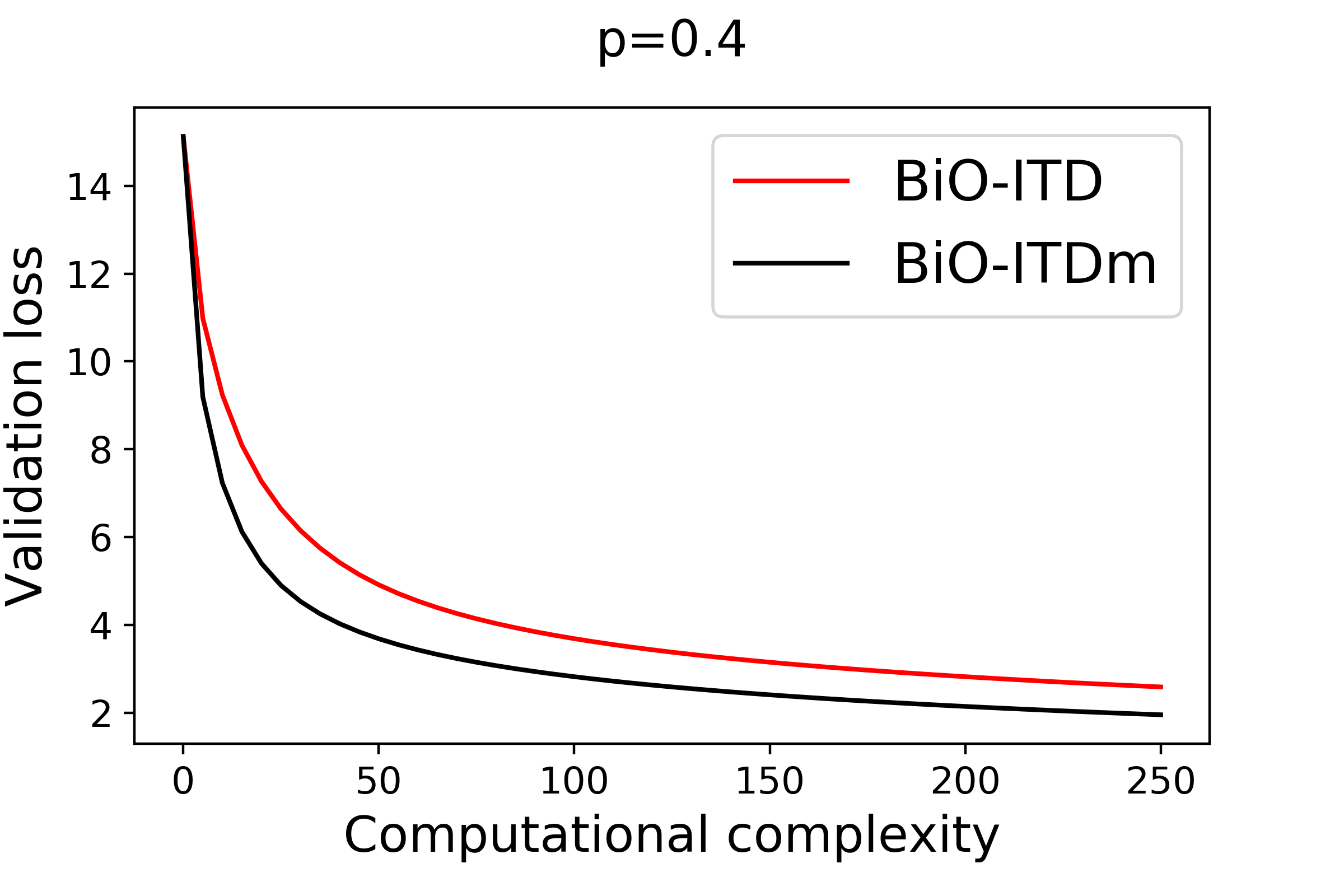}
    \end{subfigure}
    \begin{subfigure}[b]{0.235\textwidth}
        \centering 
        \includegraphics[width=1.13\textwidth]{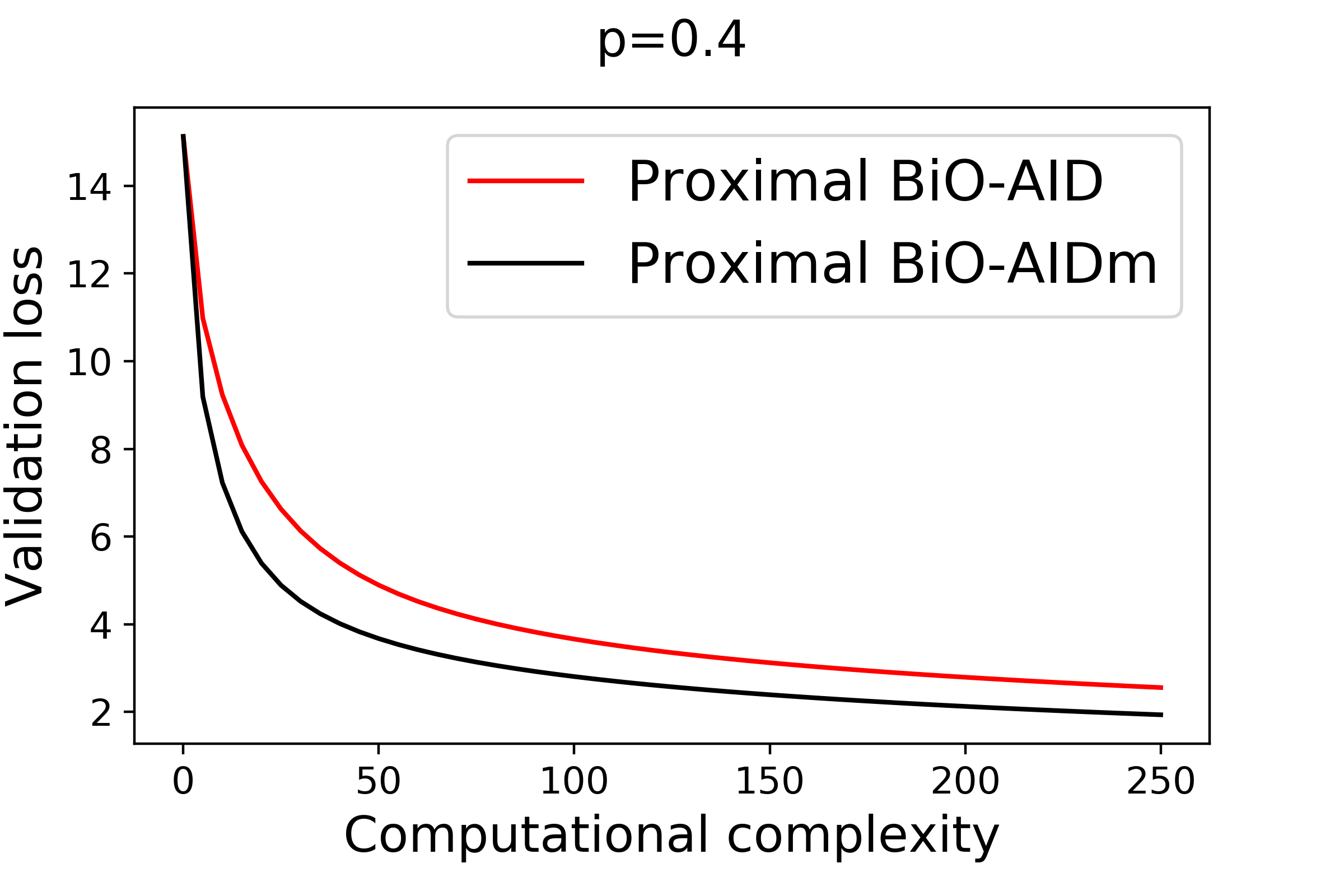}
    \end{subfigure}
    \begin{subfigure}[b]{0.235\textwidth}
        \centering 
        \includegraphics[width=1.13\textwidth]{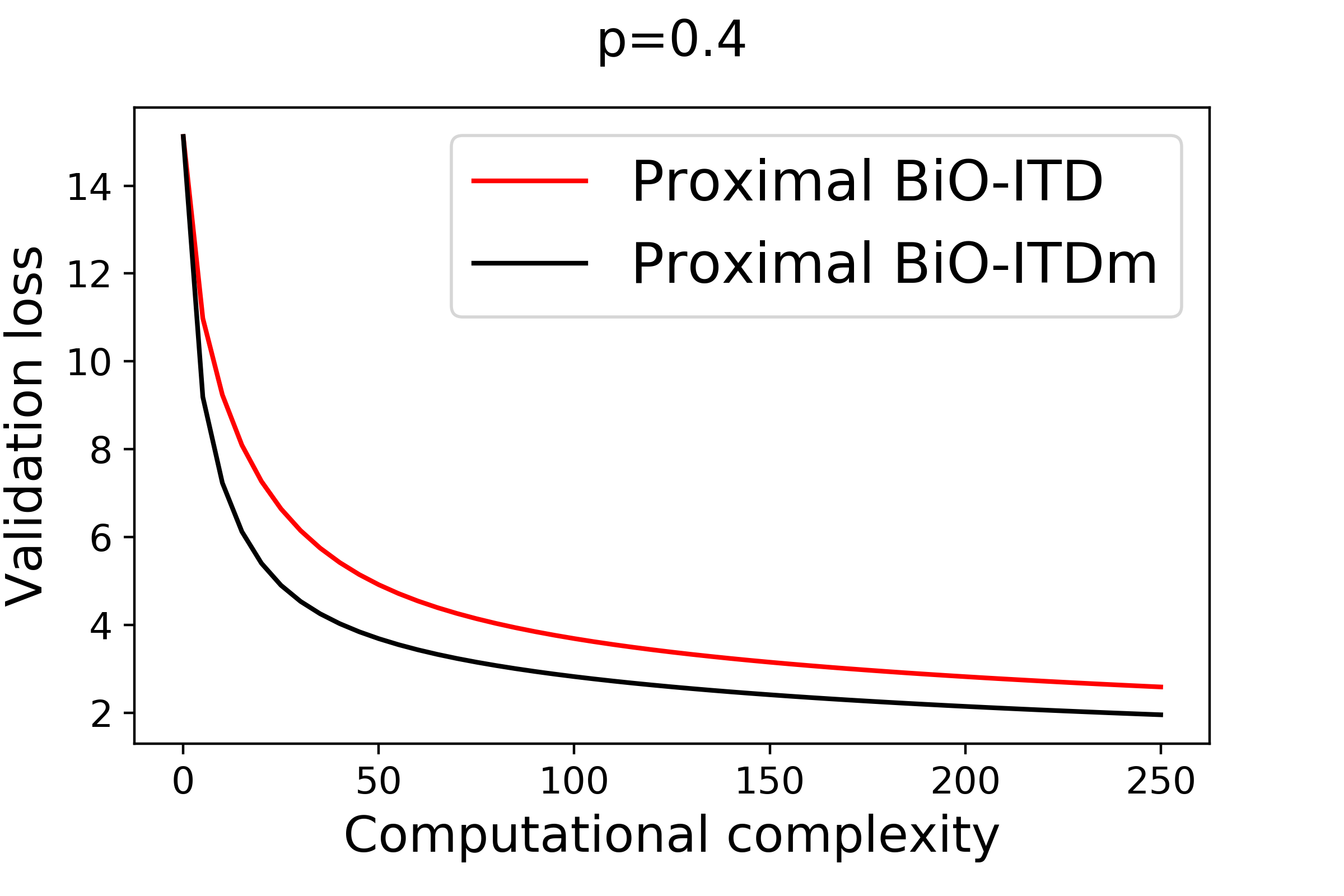}
    \end{subfigure}
    \vspace{-2mm}
	\caption{Comparison of bi-level optimization algorithms under data corruption rate $p=0.1$ (top row), $p=0.2$ (middle row) and $p=0.4$ (bottom row). The $y$-axis corresponds to the 
	upper-level objective function value, and the $x$-axis corresponds to the overall computation complexity (number of inner gradient descent steps).}\label{fig: outer} % (computation of gradients and AID/ITD)
	\vspace{0mm}
\end{figure*}

\section{Experiment}\label{sec: experiment}
We apply our bi-level optimization algorithm to solve a regularized data cleaning problem \cite{shaban2019truncated} with the MNIST dataset \cite{lecun1998gradient} and a linear classification model. We generate a training dataset $\mathcal{D}_{\text{tr}}$ with 20k samples, a validation dataset $\mathcal{D}_{\text{val}}$ with 5k samples, and a test dataset with 10k samples. In particular, we corrupt the training data by randomizing a proportion $p\in (0,1)$ of their labels, and the goal of this application is to identify and avoid using these corrupted training samples. The corresponding bi-level problem is written as follows. 
\begin{align}
    &\min_{\lambda} \frac{1}{|\mathcal{D}_{\text{val}}|}\!\sum_{(x_i,y_i)\in \mathcal{D}_{\text{val}}} \!\!\!\Big(L\big(w^*(\lambda)^{\top}x_i,y_i\big)\nonumber - \gamma\min(|\lambda_i|,a)\Big), \nonumber\\
    &{\rm where}~w^*(\lambda)=\frac{1}{|\mathcal{D}_{\text{tr}}|}\!\sum_{(x_i,y_i)\in\mathcal{D}_{\text{tr}}}\!\!\!\!\!\!\! \sigma(\lambda_i)L\big(w^{\top}\!x_i,y_i\big)+\rho\|w\|^2, \nonumber
\end{align}
where $x_i, y_i$ denote the data and label of the $i$-th sample, respectively, $\sigma(\cdot)$ denotes the sigmoid function, $L$ is the cross-entropy loss, and $\rho,\gamma>0$ are regularization hyperparameters. The regularizer $\rho\|w\|^2$ makes the lower-level objective function strongly convex. In particular, we add the nonconvex and nonsmooth regularizer $-\gamma\min(|\lambda_i|,a)$ to the upper-level objective function. Intuitively, it encourages $|\lambda_i|$ to approach the large positive constant $a$ so that the training sample coefficient $\sigma(\lambda_i)$ is close to either 0 or 1 for corrupted and clean training samples, respectively. 
In this experiment we set $a=20$. Therefore, such a regularized bi-level data cleaning problem belongs to the problem class considered in this paper. 

We compare the performance of our proximal BiO-AIDm with several bi-level optimization algorithms, including proximal BiO-AID (without accelerated AID), BiO-AID (without accelerated AID) and BiO-AIDm (with accelerated-AID). In particular, for BiO-AID and BiO-AIDm, we apply them to solve the unregularized data cleaning problem (i.e., $\gamma=0$). This serves as a baseline that helps understand the impact of regularization on the test performance. In addition, we also implement all these algorithms by replacing the AID scheme with the ITD scheme to demonstrate their generality. 

%These 8 algorithms are named (proximal) BiO-AID(m), (proximal) BiO-ITD(m), with $2\times 2\times 2$ combinations (proximal vs non-proximal for the upper-level function, momentum vs non-momentum in the inner gradient descent steps, and AID/ITD scheme for estimating implicit gradient). 

\textbf{Hyperparameter setup.} We consider choices of corruption rates $p=0.1, 0.2, 0.4$, regularization parameters $\gamma=0.5,5.0$ and $\rho=10^{-3}$. We run each algorithm for $K=50$ outer iterations with stepsize $\beta=0.5$ and $T=5$ inner gradient steps with stepsize $\alpha=0.1$. For the algorithms with momentum accelerated AID/ITD, we set the momentum parameter $\eta=1.0$. 

\begin{table*}
\caption{Comparison of test accuracy (test loss) under different corruption rates $p$. }\label{table}
% Please add the following required packages to your document preamble:
% \usepackage{multirow}
\centering
% Please add the following required packages to your document preamble:
% \usepackage{multirow}
\begin{tabular}{llccc}
\hline
\multicolumn{2}{c}{}                  & $p=0.1$ & $p=0.2$ & $p=0.4$ \\ \hline
\multicolumn{2}{l}{BiO-AID} & 71.00\% (1.4933)  &  61.40\% (1.7959)  &  47.50\% (2.4652) \\ \hline
\multicolumn{2}{l}{BiO-AIDm}   & 73.80\% (1.1090) &  63.80\% (1.3520)  &  49.20\% (1.8713) \\ \hline
\multirow{2}{*}{proximal BiO-AID} & $\gamma=0.5$  &   71.00\% (1.4931)   &  61.40\% (1.7957)   &     47.50\% (2.4649)    \\ \cline{2-5} 
                               & $\gamma=5.0$  &   71.00\% (1.4914)    &    61.50\% (1.7938)   &     47.50\% (2.4628)    \\ \hline
\multirow{2}{*}{proximal BiO-AIDm} & $\gamma=0.5$  &   73.80\% (1.1089)   &   63.80\% (1.3519)     &    49.20\% (1.8711) \\ \cline{2-5} 
                               & $\gamma=5.0$  &    73.90\% (1.1081)   &   63.80\% (1.3511)     &   49.20\% (1.8702)     \\ \hline\hline
\multicolumn{2}{l}{BiO-ITD}    & 71.10\% (1.4971)  &  61.50\% (1.8000) &    47.50\% (2.4714)  \\ \hline
\multicolumn{2}{l}{BiO-ITDm}  &  73.80\% (1.1102) &    63.80\% (1.3534)  &   49.10\% (1.8738)  \\ \hline
\multirow{2}{*}{proximal BiO-ITD}  & $\gamma=0.5$  &   71.10\% (1.4968)    &   61.50\% (1.7996)     &    47.50\% (2.4708)     \\ \cline{2-5} 
                               & $\gamma=5.0$  &  71.20\% (1.4947)    &   61.60\% (1.7960)     &    47.50\% (2.4652)     \\ \hline
\multirow{2}{*}{proximal BiO-ITDm}  & $\gamma=0.5$  &  73.80\% (1.1100) &  63.80\% (1.3530)   & 49.10\% (1.8732)  \\ \cline{2-5} 
                               & $\gamma=5.0$  &  73.90\% (1.1079) &  64.10\% (1.3495) & 49.40\% (1.8680) \\ \hline
\end{tabular}
\end{table*}

\subsection{Optimization Performance}
We first investigate the effect of momentum acceleration on the optimization performance. In Figure \ref{fig: outer}, we plot the upper-level objective function value versus the computational complexity for different bi-level algorithms under different data corruption rates. In these figures, we separately compare the non-proximal algorithms and the proximal algorithms, as their upper-level objective functions are different (non-proximal algorithms are applied to solve the unregularized bi-level problem). 
It can be seen that all the bi-level optimization algorithms with momentum accelerated AID/ITD schemes consistently converge faster than their unaccelerated counterparts. The reason is that the momentum scheme accelerates the convergence of the inner gradient descent steps, which yields a more accurate implicit gradient and thus accelerates the convergence of the outer iterations. 

\subsection{Test Performance}

To understand the impact of {momentum and} the nonconvex regularization on the test performance of the model, we report the test accuracy and test loss of the models trained by all the algorithms in Table \ref{table}. It can be seen that the bi-level optimization algorithms with momentum accelerated AID/ITD achieve significantly better test performance than their unaccelerated counterparts. This demonstrates the advantage of introducing momentum to accelerate the AID/ITD schemes. Also, our proximal BiO-AIDm and proximal BiO-ITDm achieve the best test performance among all the cases. Furthermore, we observe that the test loss decreases as the regularizer coefficient $\gamma$ increases. Therefore, adding such a regularizer improves test performance via distinguishing the sample coefficients $\sigma(\lambda_i)$ between corrupted and clean training samples. Lastly, a larger corruption rate $p$ leads to a lower test performance, which is reasonable. 

%the models  to make prediction on the test dataset and obtain classification accuracy as well as average entropy loss, which are listed in Table \ref{table}. It can be seen from Table \ref{table} that each algorithm with momentum significantly outperforms its counterpart that has no momentum on both test accuracy and test loss, which again demonstrates the acceleration effect of momentum. Also, each proximal algorithm has smaller test loss (with only one exception) and not lower test accuracy than its non-proximal counterpart. Furthermore, the test loss decreases as the regularizer coefficient $\gamma$ increases, and some proximal algorithms with the largest regularizer coefficient $\gamma=5.0$ have higher test accuracy than their counterparts with smaller $\gamma$. Therefore, such a regularizer improves test performance by distinguishing the sample coefficients $\sigma(\lambda_i)$ between corrupted and clean training samples. Finally, it can also be seen that a larger corruption rate $p$ yields lower test performance, which is reasonable. 

%\vspace{-0.2cm}
\section{Conclusion} 
%\vspace{-0.2cm}
In this paper, we provided a comprehensive analysis of the proximal BiO-AIDm algorithm with momentum acceleration for solving regularized nonconvex and nonsmooth bi-level optimization problems. Our key finding is that this algorithm admits an intrinsic monotonically decreasing potential function, which fully tracks the bi-level optimization progress. Based on this result, we established the first global convergence rate of proximal BiO-AIDm to a critical point in {regularized nonconvex optimization}, which is faster than that of BiO-AID. We also characterized the asymptotic convergence behavior and rates of the algorithm under the local \KL geometry. We anticipate that this new analysis framework can be extended to study the convergence of other bi-level optimization algorithms, including stochastic bi-level optimization. In particular, it would be interesting to explore how bi-level optimization algorithm design affects the form of the potential function and leads to different convergence guarantees and rates in nonconvex bi-level optimization. 

%{\bf Limitations:} The developed bi-level optimization algorithm has the potential to be applied to few-shot meta-learning, hyper-parameter optimization, reinforcement learning, etc. However, the analysis developed in this paper does not consider the structures of these applications, which involve non-strongly convex lower-level optimization, Markovian sampling, etc. Also, this paper does not study the convergence guarantee in stochastic case.
%{\bf Negative social impacts:} This study is a fundamental study on theoretical understanding of nonconvex optimization, and has no potential negative impact on the society.

%\bibliography{iclr2021_conference}
%\bibliographystyle{iclr2021_conference}
{%\small
	\bibliographystyle{icml2022}
	\bibliography{./ref}
}

\newpage
\onecolumn
\appendix

%################################################
%used to make content
\addcontentsline{toc}{section}{Appendix} % Add the appendix text to the document TOC
\part{Supplementary Materials} % Start the appendix part
\parttoc % Insert the appendix TOC
%################################################
\allowdisplaybreaks

\section{Proof of \Cref{lemma: lyapunov}}
\propositionlyapunov*

\begin{proof}
    
Based on the smoothness of the function $\Phi(x)$ established in~\Cref{le:aidhy}, we have 
\begin{align}
\Phi(x_{k+1}) &\le \Phi(x_k)  + \langle \nabla \Phi(x_k), x_{k+1}-x_k\rangle + \frac{L_\Phi}{2} \|x_{k+1}-x_k\|^2. \label{eq: 2}
\end{align}
On the other hand, by the definition of the proximal gradient step of $x_k$, we have 
\begin{align}
	h(x_{k+1}) + \frac{1}{2\beta} \|x_{k+1} - x_k + \beta \widehat{\nabla} \Phi(x_k)\|^2 \le h(x_{k}) + \frac{1}{2\beta} \|\beta \widehat{\nabla} \Phi(x_k)\|^2,
\end{align}
which further simplifies to 
\begin{align}
h(x_{k+1}) \le h(x_{k}) - \frac{1}{2\beta} \|x_{k+1} - x_k \|^2  - \inner{x_{k+1} - x_k}{\widehat{\nabla} \Phi(x_k)}. \label{eq: 1}
\end{align}
Adding up \cref{eq: 1} and \cref{eq: 2} yields that
\begin{align}
&\Phi(x_{k+1}) + h(x_{k+1})\nonumber\\
&\le \Phi(x_k) +h(x_k) -  \Big(\frac{1}{2\beta} - \frac{L_\Phi}{2} \Big) \|x_{k+1}-x_k\|^2 + \langle x_{k+1}-x_k, \nabla \Phi(x_k) - \widehat{\nabla} \Phi(x_k)\rangle  \nonumber\\
&\le \Phi(x_k)+h(x_k)  - \Big(\frac{1}{2\beta} -\frac{L_\Phi}{2} \Big) \|x_{k+1}-x_k\|^2  + \| x_{k+1}-x_k\| \| \nabla \Phi(x_k) - \widehat{\nabla} \Phi(x_k)\| \nonumber\\
&\le \Phi(x_k) +h(x_k) - \Big(\frac{1}{2\beta} -\frac{L_\Phi}{2} - \frac{\Gamma}{2}\Big) \|x_{k+1}-x_k\|^2  + \frac{1}{2\Gamma}\| \nabla \Phi(x_k) - \widehat{\nabla} \Phi(x_k)\|^2. \nonumber\\
&\le \Phi(x_k) +h(x_k) - \Big(\frac{1}{2\beta} -\frac{L_\Phi}{2} - \frac{\Gamma}{2}\Big) \|x_{k+1}-x_k\|^2  + \frac{1}{2}\|y_{k+1}-y^*(x_k)\|^2, \label{eq: 4}
\end{align}
where the last inequality utilizes \Cref{le:aidhy}. Next, note that {$y_{k+2}=y^T(x_{k+1},y_{k+1})$ is generated by minimizing the strongly-convex function $g$ through $T$ gradient descent steps with Nesterov's momentum with the initial point $y_{k+1}$}. Hence, with $\alpha = \frac{1}{L}$ and $\eta=\frac{\sqrt{\kappa}-1}{\sqrt{\kappa}+1}$ {\cite{Nesterov2014}}, we obtain that
\begin{align}
	&\|y_{k+2}-y^*(x_{k+1})\|^2 \nonumber\\
	&\le (1+\kappa)(1-\kappa^{-0.5})^T \|y_{k+1}-y^*(x_{k+1})\|^2 \nonumber\\
	&\le (1+\kappa)(1-\kappa^{-0.5})^T \big(2\|y_{k+1}-y^*(x_k)\|^2+2\|y^*(x_k)-y^*(x_{k+1})\|^2\big) \nonumber\\
	&\stackrel{(i)}{\le} \frac{1}{4} \|y_{k+1}-y^*(x_k)\|^2 + \frac{\kappa^2}{4}\|x_{k+1}-x_k\|^2,\label{eq: 3}
\end{align}
where (i) uses the fact that $y^*$ is $\kappa$-Lipschitz (proved in Proposition 1 of \cite{chen2021proximal}) and $T\ge \frac{\ln(8(1+\kappa))}{\ln((1-\kappa^{-0.5})^{-1})}=\mathcal{O}(\sqrt{\kappa}\ln\kappa)$.

Adding up \cref{eq: 3} and \cref{eq: 4} yields that
\begin{align}
&\Phi(x_{k+1}) + h(x_{k+1}) + \|y_{k+2}-y^*(x_{k+1})\|^2 \nonumber\\
&\le \Phi(x_k) +h(x_k) - \Big(\frac{1}{2\beta} -\frac{L_\Phi}{2} - \frac{\Gamma}{2} - \frac{\kappa^2}{4}\Big) \|x_{k+1}-x_k\|^2 + \frac{3}{4}\|y_{k+1}-y^*(x_k)\|^2 \nonumber\\
&\stackrel{(i)}{\le} \Phi(x_k) +h(x_k) - \Big(\frac{1}{2\beta} -\frac{L_\Phi}{2} - \frac{\Gamma}{2} - \frac{\kappa^2}{4}\Big) \|x_{k+1}-x_k\|^2 + \frac{3}{4}\|y_{k+1}-y^*(x_k)\|^2 \nonumber\\
&\stackrel{(ii)}{\le} \Phi(x_k) +h(x_k) - \frac{1}{4\beta} \|x_{k+1}-x_k\|^2 + \frac{3}{4}\|y_{k+1}-y^*(x_k)\|^2 \nonumber
\end{align}
{where (i) uses the number of iterations that $T\ge \frac{\ln 8}{\ln((1-\kappa^{-0.5})^{-1})}=\mathcal{O}(\sqrt{\kappa})$ to ensure that $(1-\kappa^{-1})^T\le \frac{1}{8}$, and (ii) uses the stepsize $\beta\le \frac{1}{2}(L_{\Phi}+\Gamma+\kappa^2)^{-1}$. Defining the potential function $H(x_k,y_k):= \Phi(x_k) +h(x_k) +\frac{7}{8}\|y^T(x_k,y_k)-y^*(x_k)\|^2=\Phi(x_k) +h(x_k) +\frac{7}{8}\|y_{k+1}-y^*(x_k)\|^2$ and rearranging the above inequality yields that}
%To optimize the above inequality over $A_1, A_2>0$, we let $A_1 =\frac{1}{\Gamma}$ so that $\frac{A_1 \Gamma}{2}=\frac{1}{2}$. Moreover, we let $A_2 = {1}$ and choose $T$ such that $(1-\kappa^{-1})^T (1+A_2^{-1})<\frac{1}{4}$, which further implies that $\kappa^2(1-\kappa^{-1})^T(1+A_2) {<\frac{\kappa^2}{4}}$. To summarize, with $T\ge \mathcal{O}(\ln^{-1} \frac{\kappa}{\kappa-1})\approx \mathcal{O}(\kappa)$, we can have 
%\begin{align}
%&\Phi(x_{k+1}) + h(x_{k+1}) + \|y^T(x_{k+1})-y^*(x_{k+1})\|^2 \nonumber\\
%&\le \Phi(x_k) +H(x_k,y_k) - \Big(\frac{1}{2\beta} -\frac{L_\Phi}{2} - \frac{\Gamma}{2} - {\frac{\kappa^2}{4}}\Big) \|x_{k+1}-x_k\|^2  + \frac{3}{4}\|y^T(x_k,y_k)-y^*(x_k)\|^2.
%\end{align}
%Choosing $\beta \le \frac{1}{2}(L_{\Phi}+\Gamma+\kappa^2)^{-1}$ and rearranging the above inequality yields that %$\beta\le \frac{1}{2}\min \{L_\Phi, \Gamma, \kappa^3\} = \frac{1}{2}\min \{L_\Phi, \Gamma\}$
%\begin{align}
%&\Phi(x_{k+1}) + h(x_{k+1}) + \frac{7}{8}\|y^T(x_{k+1})-y^*(x_{k+1})\|^2 \nonumber\\
%&\le \Phi(x_k) +H(x_k,y_k) +\frac{7}{8}\|y^T(x_k,y_k)-y^*(x_k)\|^2- \frac{1}{4\beta} \|x_{k+1}-x_k\|^2 \nonumber\\
%&\quad - \frac{1}{8} \big(\|y^T(x_k,y_k)-y^*(x_k)\|^2 + \|y^T(x_{k+1})-y^*(x_{k+1})\|^2 \big).
%\end{align}
%Define $H(x_k,y_k):= \Phi(x_k) +H(x_k,y_k) +\frac{7}{8}\|y^T(x_k,y_k)-y^*(x_k)\|^2$, we finally obtain that
\begin{align}
H(x_{k+1},y_{k+1}) &\le H(x_k,y_k) - \frac{1}{4\beta} \|x_{k+1}-x_k\|^2 -  \frac{1}{8} \big(\|y_{k+1}-y^*(x_k)\|^2 + \|y_{k+2}-y^*(x_{k+1})\|^2 \big). \nonumber
\end{align}
\end{proof}

\section{Proof of \Cref{thm: 1}}
\theorema*

\begin{proof}
We first prove the item 1. Summing \Cref{lemma: lyapunov} from $k=0,1,...,K-1$, we obtain that for all $K\in\mathbb{N}_+$,
\begin{align}
&\sum_{k=0}^{K-1} \frac{1}{4\beta} \|x_{k+1}-x_k\|^2 +  \frac{1}{8} \big(\|y_{k+1}-y^*(x_k)\|^2 + \|y_{k+2}-y^*(x_{k+1})\|^2 \big) \nonumber\\
&\le H(x_0,y_0) - H(x_K,y_K) \nonumber\\
&\stackrel{(i)}{\le} H(x_0,y_0) - \inf_x (\Phi+g)(x) \nonumber\\
&< +\infty.
\end{align}
where (i) uses $H(x,y)\ge (\Phi+g)(x)$ and the item 3 of Assumption \ref{assum:geo} that $\Phi+g$ is lower bounded.

Letting $K\to \infty$, we further obtain that 
\begin{align}
\sum_{k=0}^{\infty} \frac{1}{4\beta} \|x_{k+1}-x_k\|^2 +  \frac{1}{8} \big(\|y_{k+1}-y^*(x_k)\|^2 + \|y_{k+2}-y^*(x_{k+1})\|^2 \big) {<} +\infty. \label{eq: sum_dH}
\end{align}
Hence, we conclude that $\|x_{k+1}-x_k\|\to 0, \|y_{k+1}-y^*(x_k)\|\to 0$, which proves the item 1.

%Before we prove the rest of the items, we first prove some useful results on the potential function $H(x)$. By \Cref{assum:geo} we know that $\Phi+h$ is bounded below and have compact sub-level sets, and we first show that $H$ also satisfies similar conditions. First, note that $H(x,y')=\Phi(x)+h(x)+\frac{7}{8}\|y^T(x,y')-y^*(x)\|^2 \ge \Phi(x)+h(x)$. Taking infimum over $x,y'$ on both sides we obtain that $\inf_{x,y'} H(x,y')\ge \inf_{x}\Phi(x)+h(x)> -\infty$. This shows that $H(x,y')$ is bounded below. Second, consider the sub-level set $\mathcal{X}_\alpha:=\{(x,y'): H(x,y')\le \alpha \}$ for any $\alpha\in \mathbb{R}$. \red{Note that $H(x,y')\le \alpha$ implies $\Phi(x)+h(x)\le \alpha$, which further implies that $x$ is bounded based on the item 3 of Assumption \ref{assum:geo}. $H(x,y')\le \alpha$ also implies that $\|y^T(x,y')-y^*(x)\|^2\le \|y^0(x,y')-y^*(x)\|^2\le\sqrt{8\alpha/7}$ since the $\|y^T(x,y')-y^*(x)\|$ monotonically decreases as the number $T$ of gradient descent steps increases.} This set is equivalent to $\{x: \Phi(x)+g(x) +\frac{7}{8}\| {y^T(x,y') - y^*(x)} \|^2\le \alpha \}$, which is a subset of the compact set  $\{x: \Phi(x)+g(x)\le \alpha\}$. Therefore, the sub-level set $\mathcal{X}_\alpha$ must be bounded. Hence, we have shown that $H(x)$ is bounded below and have bounded sub-level set. 

Next, we prove the item 2. We have shown in \Cref{lemma: lyapunov} that $\{H(x_k,y_k)\}_k$ is monotonically decreasing. Since $H(x_k,y_k){\ge \Phi(x_k)+g(x_k) \ge\inf_{x'} \Phi(x')+g(x')}$, which is bounded below, we conclude that $\{H(x_k,y_k)\}_k$ has a finite limit $H^*>-\infty$, i.e., $\lim_{k\to \infty} (\Phi+h)(x_k) + \frac{7}{8} \|y_{k+1} - y^*(x_k) \|^2 = H^*$. 
Moreover, since we already showed that $\|y_{k+1} - y^*(x_k) \| \to 0$, we further conclude that $\lim_{k\to \infty} (\Phi+h)(x_k) = H^*$. 

Next, we prove the item 3. {$\{x_k\}_k$ is bounded since $\Phi(x_k)+g(x_k)\le H(x_k,y_k)\le H(x_0,y_0)$ and $\Phi+g$ has compact sub-level set. Note that
\begin{align}
    \|y_k\|&\le \|y_k-y^*(x_{k-1})\|+\|y^*(x_{k-1})-y^*(0)\|+\|y^*(0)\|\nonumber\\
    &\overset{(i)}{\le} \|y_k-y^*(x_{k-1})\|+\kappa\|x_{k-1}\|+\|y^*(0)\|,
\end{align}
where (i) uses the $\kappa$-Lipschitz continuity of $y^*$ (Proved in Proposition 1 of \cite{chen2021proximal}). Since $\|y_k-y^*(x_{k-1})\|\to 0$ and $\|x_{k-1}\|$ is bounded, the above inequality implies that $\{x_k,y_k\}_k$ is bounded and thus has compact set of limit points.} 

%Since $\{H(x_k,y_k)\}_k$ is monotonically decreasing and $H(x)$ has bounded sub-level set, we conclude that $\{x_k,y_k\}_k$ is bounded and hence {has} a compact set of limit points. 
Next, we bound the subdifferential of the function. By the optimality condition of the proximal gradient update of $x_k$ and the summation rule of subdifferential, we obtain that
\begin{align}
	\zero \in \partial h(x_{k+1}) + \frac{1}{\beta} \big(x_{k+1} - x_k + \beta \widehat{\nabla} \Phi(x_k)\big). \nonumber
\end{align}
The above equation further implies that 
\begin{align}
 \frac{1}{\beta} \big(x_{k} - x_{k+1}\big)  +\nabla  \Phi(x_{{k+1}}) - \widehat{\nabla} \Phi(x_k) \in \partial (\Phi+h)(x_{k+1}) . \nonumber
\end{align}
Then, we obtain that 
\begin{align}
	\dist_{\partial (\Phi+h)(x_{k+1})}(\zero) &\le  \frac{1}{\beta} \|x_{{k+1}} - x_{k}\| +\|\nabla  \Phi(x_{k+1}) - \widehat{\nabla} \Phi(x_k)\| \nonumber\\
	&\le  \frac{1}{\beta} \|x_{k} - x_{k+1}\| +{\|\nabla  \Phi(x_{k+1}) - \nabla \Phi(x_k)\|} +\|\nabla  \Phi(x_k) - \widehat{\nabla} \Phi(x_k)\| \nonumber\\
	&\overset{(i)}{\le} \Big(\frac{1}{\beta}+{L_{\Phi}}\Big) \|x_{k} - x_{k+1}\| + \sqrt{\Gamma}\|y_{k+1} - y^*(x_k)\|, \label{eq: dPhih}
\end{align}
where (i) follows from \Cref{le:aidhy}. Since we have shown that $\|x_{k+1}-x_k\|\to 0, \|y_{k+1}-y^*(x_k)\|\to 0$, the above inequality implies that 
\begin{align}
&\frac{1}{\beta} \big(x_{k} - x_{k+1}\big)  +\nabla  \Phi(x_{{k+1}}) - \widehat{\nabla} \Phi(x_k) \in \partial (\Phi+h)(x_{k+1}), \nonumber\\
\text{and} \quad & \frac{1}{\beta} \big(x_{k} - x_{k+1}\big)  +\nabla \Phi(x_{{k+1}}) - \widehat{\nabla} \Phi(x_k) \to \zero. \label{eq: 5}
\end{align}
Next, consider any limit point $x^*$ of $\{x_k\}_k$ so that $x_{k(j)} \overset{j}{\to} x^*$ along a subsequence. By the proximal update of $x_{k(j)}$, we have 
\begin{align}
h(x_{k(j)}) + &\frac{1}{2\beta} \|x_{k(j)} -x_{k(j)-1} \|^2 + \inner{x_{k(j)} -x_{k(j)-1} }{\widehat{\nabla} \Phi(x_{k(j)-1})}  \nonumber\\ 
&\le h(x^*) + \frac{1}{2\beta} \|x^* -x_{k(j)-1} \|^2 + \inner{x^* -x_{k(j)-1} }{\widehat{\nabla} \Phi(x_{k(j)-1})}. \nonumber
\end{align}
Rearranging the above inequality yields that
\begin{align}
&h(x_{k(j)}) + \frac{1}{2\beta} \|x_{k(j)} -x_{k(j)-1} \|^2 \nonumber\\
&\le h(x^*) + \frac{1}{2\beta} \|x^* -x_{k(j)-1} \|^2 + \inner{x^* -x_{k(j)} }{\widehat{\nabla} \Phi(x_{k(j)-1})- \nabla \Phi(x_{k(j)-1})+\nabla \Phi(x_{k(j)-1})} \nonumber\\
&\le h(x^*) + \frac{1}{2\beta} \|x^* -x_{k(j)-1} \|^2 + \inner{x^* -x_{k(j)} }{\nabla \Phi(x_{k(j)-1})} \nonumber\\
&\quad+ \|x^* -x_{k(j)} \|\|\widehat{\nabla} \Phi(x_{k(j)-1})- \nabla \Phi(x_{k(j)-1})\| \nonumber\\
&\le h(x^*) + \frac{1}{2\beta} \|x^* -x_{k(j)-1} \|^2 + \inner{x^* -x_{k(j)} }{\nabla \Phi(x_{k(j)-1})} \nonumber\\
&\quad+ \sqrt{\Gamma}\|x^* -x_{k(j)} \|\|y_{k(j)} - y^*(x_{k(j)-1})\|. \nonumber
\end{align}
Taking limsup on both sides of the above inequality and noting that $\{x_k\}_k$ is bounded, $\nabla \Phi$ is Lipschitz, $\|x_{k+1} - x_k\| \to 0$, $x_{k(j)} \overset{j}{\to} x^*$ and $\|y_{k(j)} -y^*(x_{k(j)-1})\|\overset{j}{\to} 0$, we conclude that $\lim\sup_j h(x_{k(j)}) \le  h(x^*)$.
Since $h$ is lower-semicontinuous, we know that $\lim\inf_j h(x_{k(j)}) \ge  h(x^*)$. Combining these two inequalities yields that $\lim_j h(x_{k(j)}) = h(x^*)$. By continuity of $\Phi$, we further conclude that $\lim_j (\Phi+h)(x_{k(j)}) = (\Phi+g)(x^*)$. Since we have shown that the entire sequence $\{(\Phi+h)(x_k)\}_k$ converges to a certain finite limit $H^*$, we conclude that $(\Phi+h)(x^*) \equiv H^*$ for all the limit points $x^*$ of $\{ x_k\}_k$. {This proves the item 3.}

Finally, we prove the item 4. To this end, we have shown that for every subsequence $x_{k(j)} \overset{j}{\to} x^*$, we have that $(\Phi+h)(x_{k(j)}) \overset{j}{\to} H^*{=(\Phi+h)(x^*)}$ and there exists $u_k \in \partial (\Phi+ h)(x_{k})$ such that $u_k\to \zero$ (by \cref{eq: 5}). 
Recall the definition of limiting sub-differential, we conclude that every limit point $x^*$ of $\{x_k\}_k$ is a critical point of $(\Phi+h)(x)$, i.e., $\zero \in \partial (\Phi+ h)(x^*)$.
\end{proof}

\section{Proof of \Cref{coro_alg1}}
\coroalg*
{
\begin{proof}
\begin{align}
	\|G(x_{k+1})\|=&\frac{1}{\beta}\|x_{k+1}-\prox{\beta h} (x_{k+1}- \beta \nabla \Phi(x_{k+1}))\|\nonumber\\
	&\overset{(i)}{\le} \frac{1}{\beta} \big\|x_{k+1}-x_k+\beta\big(\nabla\Phi(x_{k+1}) - \widehat{\nabla}\Phi(x_k)\big)\big\|\nonumber\\
	&\le \frac{1}{\beta} \big\|x_{k+1}-x_k\big\| +\big\|\nabla\Phi(x_{k+1}) - \nabla\Phi(x_k)\big\| +\big\|\nabla\Phi(x_k) - \widehat{\nabla}\Phi(x_k)\big\|\nonumber\\
	&\overset{(ii)}{\le} \Big(\frac{1}{\beta}+L_{\Phi}\Big)\|x_{k+1}-x_k\| + \sqrt{\Gamma}\|y_{k+1}-y^*(x_k)\| \nonumber\\
	&\overset{(iii)}{\le} \frac{2}{\beta}\|x_{k+1}-x_k\| + \sqrt{\Gamma}\|y_{k+1}-y^*(x_k)\| \nonumber
\end{align}
where (i) uses $x_{k+1}\in\prox{\beta h}\big(x_k - \beta\widehat{\nabla} \Phi(x_k)\big)$ and the non-expansiveness of proximal mapping since $h$ is convex, (ii) uses the property that $y^*$ is $\kappa$-Lipschitz continuous, and (iii) uses the stepsize $\beta\le \frac{1}{2}(L_{\Phi}+\Gamma+\kappa^2)^{-1}$. Hence, we have

\begin{align}
	&\sum_{k=0}^{K-1}\|G(x_{k+1})\|^2\nonumber\\
	&\le 2\sum_{k=0}^{K-1} \Big(\frac{4}{\beta^2}\|x_{k+1}-x_k\|^2 + \Gamma\|y_{k+1}-y^*(x_k)\|^2\Big) \nonumber\\
	&\le \max\Big(\frac{32}{\beta}, \frac{\Gamma}{16}\Big)\sum_{k=0}^{K-1} \Big(\frac{1}{4\beta} \|x_{k+1}-x_k\|^2 +  \frac{1}{8} \big(\|y_{k+1}-y^*(x_k)\|^2 + \|y_{k+2}-y^*(x_{k+1})\|^2 \big)\Big) \nonumber\\
    &\overset{(i)}{\le} \frac{32}{\beta}\big(H(x_0) - \inf_x (\Phi+g)(x)\big), \nonumber
\end{align}
where (i) uses eq. \eqref{eq: sum_dH} and the stepsize $\beta\le \frac{1}{2}(L_{\Phi}+\Gamma+\kappa^2)^{-1}$ which implies that $\frac{32}{\beta}\ge \frac{\Gamma}{16}$. Hence, 
\begin{align}
	&\min_{0\le k\le K}\|G(x_k)\| \le\sqrt{\frac{1}{K}\sum_{k=0}^{K-1}\|G(x_{t+1})\|^2} \le \sqrt{\frac{32}{K\beta}\big(H(x_0) - \inf_x (\Phi+g)(x)\big)}. \nonumber
\end{align}
To achieve $\min_{0\le k\le K}\|G(x_k)\|\le \epsilon$, it sufficies that $K\ge \frac{32}{\beta\epsilon^2}\big(H(x_0) - \inf_x (\Phi+g)(x)\big)=\mathcal{O}(\kappa^3\epsilon^{-2})$ (the maximum possible stepsize $\beta=\frac{1}{2}(L_{\Phi}+\Gamma+\kappa^2)^{-1}=\mathcal{O}(\kappa^{-3})$). Since each inner loop and each outer loop of Algorithm \ref{alg:main_deter} involves less than 7 evaluations of gradients, Hessian-vector products and proximal mappings in total, the computational complexity takes the order of $\mathcal{O}(KT)=\mathcal{O}(\kappa^{3.5}(\ln\kappa)\epsilon^{-2})$. 
\end{proof}
}

\iffalse
\section{Proof of \Cref{coro_momentum}}
%\coromomentum*
\red{
\begin{proof}
    With Nesterov's momentum, Proposition \ref{lemma: lyapunov} still holds with the same proof logic for \Cref{alg:main_deter}. The only difference is that by the convergence rate of Nesterov's accelerated proximal gradient ascent algorithm under strong concavity with the choices of $\alpha=\frac{1}{L}$, $\eta = \frac{\sqrt{\kappa}-1}{\sqrt{\kappa}+1}$ \cite{Nesterov2014}, the contraction rate \eqref{eq: 3} $1-\kappa^{-1}$ decreases to $1-\kappa^{-0.5}$. Note that at the end of the proof of Proposition \ref{lemma: lyapunov}, we use the number of inner iterations $T\ge \frac{\ln 8}{\ln((1-\kappa^{-1})^{-1})}=\mathcal{O}(\kappa)$ to ensure that $(1-\kappa^{-1})^T\le \frac{1}{8}$. Hence, with Nesterov's momentum, it suffices to use $T\ge \frac{\ln 8}{\ln((1-\kappa^{-0.5})^{-1})}=\mathcal{O}(\sqrt{\kappa})$ to ensure that $(1-\kappa^{-1})^T\le \frac{1}{8}$ and thus the same result as Proposition \ref{lemma: lyapunov} is proved. Therefore, the global convergence rate \eqref{eq: globalrate} in \Cref{coro_alg1} still holds as its proof is based on Proposition \ref{lemma: lyapunov} and the unchanged update rule that $x_{k+1}\in\prox{\beta h}\big(x_k - \beta\widehat{\nabla} \Phi(x_k)\big)$. Therefore, to achieve $\min_{0\le k\le K}\|G(x_k)\|\le \epsilon$, we still use $K\ge \mathcal{O}(\kappa^3\epsilon^{-2})$ and thus the computational complexity has the order of $\mathcal{O}(KT)=\mathcal{O}(\kappa^{3.5}\epsilon^{-2})$. 
\end{proof}
}

\section{Proof of \Cref{prop: subgradH}}

\fi

\section{Auxiliary Lemma for Proving \Cref{thm:objrate}}
%\propositionsubgradH*
{
We first inspect the mapping $y^T(x,y)$ defined as $T$ Nesterov's accelerated gradient descent steps for minimizing $g(x,\cdot)$ with initial point $y$. Define the gradient descent operator $G_x(y)=y-\alpha \nabla_2 g(x,y)$. Note that $g(x, \cdot)$ is $L$-smooth and $\mu$-strongly convex, and our learning rate $\alpha=\frac{1}{L}\le \frac{2}{L+\mu}$. Hence, based on Lemma 3.6 in \cite{hardt2016train}, $G_x(\cdot)$ is a contraction mapping with Lipschitz constant $1-\frac{\alpha L\mu}{L+\mu}=\frac{\kappa}{\kappa+1}$. Also, it can be easily seen that $G_x(y)$ is 1-Lipschitz as a function of $x$ since $\|G_{x'}(y)-G_x(y')\|=\alpha\|\nabla g(x',y)-\nabla_2 g(x,y)\|\le \alpha L\|x'-x\|=\|x'-x\|$. With the operator $G_x$, the mapping $y^t$ can be recursively defined as follows.
\begin{align}
    &y^{0}(x,y)=y; \label{eq: y0_def}  \\
    &y^{1}(x,y)=G_x(y); \label{eq: y1_def}  \\
    &y^{t}(x,y)=(1+\eta)G_x(y^{t-1}(x,y))-\eta G_x(y^{t-2}(x,y)); t\ge 2. \label{eq: yt_def}
\end{align}

We can prove the above mapping $y^{t}$ satisfies following lemma.
\begin{lemma}\label{lemma: yt_lip}
    Under Assumptions \ref{assum:geo} \& \ref{assum:lip}, $y^T(\cdot,\cdot)$ is a $(2.5^{T+1}-1.5)$-Lipschitz continuous mapping, that is, for any two points $z:=(x,y)$ and $z':=(x',y')$,
    \begin{align}
        \|y^T(z')-y^T(z)\|\le (2.5^{T+1}-1.5)\|z'-z\|. \nonumber
    \end{align}
\end{lemma}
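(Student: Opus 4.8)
The plan is to prove the Lipschitz bound by induction on $t$, tracking a Lipschitz constant $C_t$ for the mapping $z \mapsto y^t(z)$ and showing that the recursion $C_t = (1+\eta)\frac{\kappa}{\kappa+1}C_{t-1} + \eta\frac{\kappa}{\kappa+1}C_{t-2} + (\text{contribution from the }x\text{-dependence of }G_x)$ is dominated by the geometric sequence $2.5^{t+1} - 1.5$. First I would record the two elementary facts already established in the preamble to the lemma: $y \mapsto G_x(y)$ is a contraction with constant $\frac{\kappa}{\kappa+1} < 1$ (hence in particular $1$-Lipschitz in $y$), and $x \mapsto G_x(y)$ is $1$-Lipschitz. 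Combining these, for any $z=(x,y)$ and $z'=(x',y')$ one gets $\|G_{x'}(y') - G_x(y)\| \le \|x'-x\| + \frac{\kappa}{\kappa+1}\|y'-y\| \le \|x'-x\| + \|y'-y\|$; more usefully, if $w,w'$ are two points with $\|w'-w\| \le a\|z'-z\|$, then $\|G_{x'}(w') - G_x(w)\| \le \|z'-z\| + \frac{\kappa}{\kappa+1}\, a\,\|z'-z\| \le (1+a)\|z'-z\|$. This is the single inequality that drives the whole induction.

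Next I would set up the induction. Base cases: from \eqref{eq: y0_def}, $y^0(z)=y$ is $1$-Lipschitz, and $1 \le 2.5^1 - 1.5 = 1$, so $C_0 = 1$ works; from \eqref{eq: y1_def}, $y^1(z) = G_x(y)$ is $(1 + \tfrac{\kappa}{\kappa+1})$-Lipschitz $\le 2$-Lipschitz, and $2.5^2 - 1.5 = 4.75 \ge 2$, so the claim holds for $t=1$. For the inductive step, assume $y^{t-1}(\cdot)$ and $y^{t-2}(\cdot)$ are Lipschitz with constants at most $C_{t-1} := 2.5^t - 1.5$ and $C_{t-2} := 2.5^{t-1} - 1.5$ respectively. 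Apply the triangle inequality to \eqref{eq: yt_def}: $\|y^t(z') - y^t(z)\| \le (1+\eta)\|G_{x'}(y^{t-1}(z')) - G_x(y^{t-1}(z))\| + \eta\|G_{x'}(y^{t-2}(z')) - G_x(y^{t-2}(z))\|$. Using the driving inequality above with $a = C_{t-1}$ and $a = C_{t-2}$, and bounding $1+\eta \le 2$, $\eta \le 1$, this is at most $2(1 + C_{t-1})\|z'-z\| + (1 + C_{t-2})\|z'-z\|$. So it suffices to check the numerical recursion $2(1 + C_{t-1}) + (1 + C_{t-2}) \le C_t$, i.e. $3 + 2(2.5^t - 1.5) + (2.5^{t-1} - 1.5) \le 2.5^{t+1} - 1.5$. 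Simplifying the left side gives $2 \cdot 2.5^t + 2.5^{t-1} - 1.5 = 2.5^{t-1}(5 + 1) - 1.5 = 6\cdot 2.5^{t-1} - 1.5$, and the right side is $2.5^{t+1} - 1.5 = 6.25 \cdot 2.5^{t-1} - 1.5$; since $6 \le 6.25$ the inequality holds, closing the induction.

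Finally I would note that the recursion constants were matched to the stated bound $2.5^{T+1} - 1.5$ precisely so the last numerical check is clean; specializing $t = T$ gives the lemma. The main obstacle — really the only subtlety — is making sure the $x$-dependence of the operator is handled correctly: $G_x$ contracts in its second argument but only a $1$-Lipschitz dependence in the first argument is available, so one cannot simply multiply Lipschitz constants through the recursion \eqref{eq: yt_def} as if $y^{t-1}$ fed into a contraction; the additive $\|x'-x\| \le \|z'-z\|$ term must be carried along at every step, which is exactly why the bound grows geometrically with base $2.5$ rather than staying bounded. A secondary point to be careful about is the Nesterov recursion \eqref{eq: yt_def} having \emph{two} previous terms, so the induction must be a two-step (strong) induction with both $C_{t-1}$ and $C_{t-2}$ in hand, and the coefficient $2$ in front of $(1+C_{t-1})$ (from $1+\eta$) must be kept — dropping it to $1$ would still work numerically but keeping it makes the bound robust to the exact value of $\eta \in (0,1)$.
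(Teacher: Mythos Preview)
Your proposal is correct and follows essentially the same approach as the paper's proof: a two-step induction on $t$ that splits $\|G_{x'}(w')-G_x(w)\|$ into the $x$-part (bounded by $\|z'-z\|$) and the contractive $y$-part, then uses $\eta\le 1$ and $\frac{\kappa}{\kappa+1}\le 1$ to reduce to the same numerical check $3+2(2.5^t-1.5)+(2.5^{t-1}-1.5)\le 2.5^{t+1}-1.5$. The only cosmetic differences are that the paper bounds the $T=1$ base case by $\sqrt{2}\|z'-z\|$ (via Cauchy--Schwarz) rather than your $2\|z'-z\|$, and it writes out the splitting inline at the inductive step rather than packaging it as a reusable ``driving inequality''; neither difference is material.
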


\begin{proof}
We will prove this Lemma by induction.

Based on eq. \eqref{eq: y0_def}, $y^0$ is $1$-Lipschitz, so this Lemma holds for $T=0$.

Based on eq. \eqref{eq: y1_def}, the following inequality holds, which implies that this Lemma also holds for $T\ge 1$
\begin{align}
    \|y^1(z')-y^1(z)\|\le& \|G_{x'}(y')-G_x(y')\|+\|G_x(y')-G_x(y)\| \nonumber\\
    \le& \|x'-x\|+\frac{\kappa}{\kappa+1}\|y'-y\| \le \sqrt{2}\|z'-z\|
\end{align}

Suppose this Lemma holds for any $T\le t-1$ ($t\ge 2$). Then, based on eq. \eqref{eq: yt_def}, 
\begin{align}
    &\|y^{t}(z')-y^{t}(z)\| \nonumber\\
    &\le (1+\eta)\|G_{x'}(y^{t-1}(z'))-G_x(y^{t-1}(z))\| + \eta\|G_{x'}(y^{t-2}(z'))-G_x(y^{t-2}(z))\| \nonumber\\
    &\le (1+\eta)\|G_{x'}(y^{t-1}(z'))-G_x(y^{t-1}(z'))\| + (1+\eta)\|G_x(y^{t-1}(z'))-G_x(y^{t-1}(z))\| \nonumber\\
    &\quad + \eta\|G_{x'}(y^{t-2}(z'))-G_x(y^{t-2}(z'))\| + \eta\|G_x(y^{t-2}(z'))-G_x(y^{t-2}(z))\| \nonumber\\
    &\le (1+\eta)\|x'-x\|+(1+\eta)\frac{\kappa}{\kappa+1}\|y^{t-1}(z')-y^{t-1}(z)\|\nonumber\\
    &\quad + \eta\|x'-x\| + \eta\frac{\kappa}{\kappa+1}\|y^{t-2}(z')-y^{t-2}(z)\| \nonumber\\
    &\stackrel{(i)}{\le} 3\|z'-z\|+2(2.5^t-1.5)\|z'-z\|+(2.5^{t-1}-1.5)\|z'-z\| \nonumber\\
    &\le \big(2.4(2.5^t)+3-1.5\big)\|z'-z\| \le (2.5^{t+1}-1.5)\|z'-z\|.
\end{align}
where (i) uses $\eta=\frac{\sqrt{\kappa}-1}{\sqrt{\kappa}+1}\le 1$, $\|x'-x\|\le\|z'-z\|$ and the assumption that $y^{T}(\cdot,\cdot)$ is $2.5^{T-1}-1.5$-Lipschitz for any $T\le t-1$. Hence, this Lemma also holds for $T=t$ and thus for all $T\in\mathbb{N}$.
\end{proof}

To prove \Cref{thm:objrate} under \KL geometry, we obtain the following bound on the sub-differential of the potential function $H$. Throughout, we denote $z:=(x,y)$, $z':=(x',y')$ and $z_k:=(x_k,y_k)$.
\begin{lemma}\label{lemma: subgradH}
    Let Assumptions~\ref{assum:geo}, \ref{assum:lip} and \ref{assum: H} hold and consider the potential function defined in \cref{eq: lyapunov}. Then, under the same choices of hyper-parameters as those of \Cref{lemma: lyapunov}, the sub-differential of $H$ satisfies the following bound:
    {\begin{align}
    \dist_{\partial_z H(z_{k+1})}(\zero) 
    &\le {\frac{2}{\beta}} \|x_{k+1} - x_{k}\|  + \sqrt{\Gamma} \|y_{k+1}-y^*(x_k)\| + 2\big(2.5^{T+1}+\kappa\big) \|y_{k+2}-y^*(x_{k+1})\|. \nonumber
    \end{align}}
\end{lemma}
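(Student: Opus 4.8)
The plan is to directly compute an element of the limiting subdifferential $\partial_z H(z_{k+1})$ and bound its norm term by term. Recall that $H(z) = \Phi(x) + h(x) + \frac{7}{8}\|y^T(x,y) - y^*(x)\|^2$, so by the sum rule for subdifferentials (valid here since $\Phi$ is $C^1$ and the quadratic term is subdifferentiable by Assumption~\ref{assum: H}) we have, writing $z_{k+1}=(x_{k+1},y_{k+1})$,
\begin{align}
\partial_z H(z_{k+1}) \subseteq \begin{pmatrix}\nabla\Phi(x_{k+1}) + \partial h(x_{k+1}) + \frac{7}{8}\partial_x(\|y^T(\cdot,\cdot)-y^*(\cdot)\|^2)(z_{k+1}) \\ \frac{7}{8}\partial_y(\|y^T(\cdot,\cdot)-y^*(\cdot)\|^2)(z_{k+1})\end{pmatrix}. \nonumber
\end{align}
First I would handle the $x$-component. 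The optimality condition of the proximal update $x_{k+1}\in\prox{\beta h}(x_k - \beta\widehat\nabla\Phi(x_k))$ gives $\frac{1}{\beta}(x_k - x_{k+1}) - \widehat\nabla\Phi(x_k) \in \partial h(x_{k+1})$, hence $\frac{1}{\beta}(x_k - x_{k+1}) + \nabla\Phi(x_{k+1}) - \widehat\nabla\Phi(x_k) \in \nabla\Phi(x_{k+1}) + \partial h(x_{k+1})$. Exactly as in the proof of \Cref{thm: 1} (eq.~\eqref{eq: dPhih}), splitting $\nabla\Phi(x_{k+1}) - \widehat\nabla\Phi(x_k)$ through $\nabla\Phi(x_k)$, using $L_\Phi$-smoothness of $\Phi$ and the AID error bound $\|\nabla\Phi(x_k)-\widehat\nabla\Phi(x_k)\|\le\sqrt\Gamma\|y_{k+1}-y^*(x_k)\|$ from \Cref{le:aidhy}, and finally the stepsize condition $\beta\le\frac12(L_\Phi+\Gamma+\kappa^2)^{-1}$ to absorb $L_\Phi$ into $\frac{1}{\beta}$, yields a bound of the form $\frac{2}{\beta}\|x_{k+1}-x_k\| + \sqrt\Gamma\|y_{k+1}-y^*(x_k)\|$ for the part of the $x$-component coming from $\nabla\Phi + \partial h$.

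Next I would bound the contribution of the quadratic tracking term $q(z):=\|y^T(x,y)-y^*(x)\|^2$ to both components. Any subgradient of $q$ at $z_{k+1}$ has the schematic form $2\,(J)^\top(y^T(z_{k+1}) - y^*(x_{k+1}))$, where $J$ collects the (sub)derivatives of the map $z\mapsto y^T(x,y)-y^*(x)$; so its norm is controlled by $2\|y^T(z_{k+1})-y^*(x_{k+1})\|$ times the Lipschitz modulus of $z\mapsto y^T(x,y)-y^*(x)$. By \Cref{lemma: yt_lip}, $y^T(\cdot,\cdot)$ is $(2.5^{T+1}-1.5)$-Lipschitz, and $y^*(\cdot)$ is $\kappa$-Lipschitz, so this modulus is at most $(2.5^{T+1}-1.5)+\kappa \le 2.5^{T+1}+\kappa$. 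Since $y^T(z_{k+1}) = y^T(x_{k+1},y_{k+1}) = y_{k+2}$ by the algorithm, the quadratic term contributes at most $2(2.5^{T+1}+\kappa)\|y_{k+2}-y^*(x_{k+1})\|$ to each of the $x$- and $y$-components (the $\frac78$ factor only improves the constant, which is why I would keep the looser coefficient $2$). Adding the $x$- and $y$-contributions and using $\|(a,b)\|\le\|a\|+\|b\|$ (or simply noting the full subgradient has both blocks bounded by these quantities) gives the stated inequality.

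The main obstacle is making the differentiation of the composite map $z\mapsto y^T(x,y)-y^*(x)$ rigorous under only the weak Assumption~\ref{assum: H}: $y^*$ and $y^T$ need not be differentiable in the classical sense, so one cannot literally write a Jacobian. The clean way around this is to bound $\dist_{\partial_z q(z_{k+1})}(\zero)$ purely via the \emph{Lipschitz constants} of the constituent maps — i.e.\ use that for $q = \|\Psi(z)\|^2$ with $\Psi$ Lipschitz with constant $\ell$, every element of $\partial q(z)$ has norm at most $2\ell\|\Psi(z)\|$ (a consequence of the calculus of subdifferentials for Lipschitz compositions, e.g.\ Clarke's chain rule, or simply from the definition together with the $\ell$-Lipschitzness of $\Psi$). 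This avoids differentiating $y^T$ and $y^*$ explicitly and only uses the Lipschitz bounds already established in \Cref{lemma: yt_lip} and in the preliminaries. The remaining steps — the proximal optimality condition and the telescoping through $\nabla\Phi(x_k)$ — are routine and parallel the argument already given for \eqref{eq: dPhih}.
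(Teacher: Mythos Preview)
Your overall strategy matches the paper's: decompose $H$ via the sum rule, handle the $(\Phi+h)$ part through the proximal optimality condition exactly as in \eqref{eq: dPhih}, and bound the subdifferential of the tracking term $q(z)=\|y^T(z)-y^*(x)\|^2$ via the Lipschitz constants of $y^T$ and $y^*$ from \Cref{lemma: yt_lip}. For the latter the paper works directly from the definition of the Fr\'echet subdifferential---expanding $\|y^T(z')-y^*(x')\|^2-\|y^T(z)-y^*(x)\|^2$ as a product of sum and difference and estimating each factor---to obtain $\|u\|\le 2(2.5^{T+1}+\kappa)\|y^T(z)-y^*(x)\|$ for every $u\in\widehat\partial_z q(z)$, hence for the limiting subdifferential by closure. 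This is precisely the ``for $q=\|\Psi\|^2$ with $\Psi$ $\ell$-Lipschitz, every subgradient has norm $\le 2\ell\|\Psi\|$'' bound that you invoke abstractly via Clarke calculus, so the two routes coincide in substance.

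One point does need repair: the inclusion you wrote goes the wrong way for the argument. To \emph{exhibit} an element of $\partial_z H(z_{k+1})$ and thereby upper-bound $\dist_{\partial_z H(z_{k+1})}(\zero)$, you need
\[
\partial(\Phi+h)(x)\times\{\zero\}+\tfrac{7}{8}\,\partial_z q(z)\ \subset\ \partial_z H(z),
\]
i.e.\ the \emph{superset} direction, which is what the paper states. This holds because the Fr\'echet sum rule always gives $\widehat\partial f+\widehat\partial g\subset\widehat\partial(f+g)$, and the proximal optimality condition yields a Fr\'echet subgradient of $h$. Your written inclusion $\partial_z H\subseteq\cdots$ is the usual limiting-subdifferential sum rule but only furnishes a \emph{lower} bound on the distance, which is useless here. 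A minor secondary point: your component-wise accounting of $q$'s contribution is unnecessary and risks doubling the constant; the Lipschitz bound already controls the full $z$-subgradient $\partial_z q$ in one shot, so the paper simply adds $\dist_{\partial(\Phi+h)(x_{k+1})}(\zero)$ and $\frac{7}{8}\dist_{\partial_z q(z_{k+1})}(\zero)$ without splitting into $x$- and $y$-blocks.
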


\begin{proof}
% Therefore, $\eta=\frac{\sqrt{\kappa}-1}{\sqrt{\kappa}+1}$
Recall the potential function $H(z):= \Phi(x) +h(x) +\frac{7}{8}\|y^T(x,y)-y^*(x)\|^2$ and that $\|y^T(z)-y^*(x)\|^2$ has a non-empty subdifferential $\partial (\|y^T(z)-y^*(x)\|^2)$. By the subdifferetial rule we have 
\begin{align}
    \partial_z H(z) &\supset \partial_z(\Phi+ h)(x) + \frac{7}{8} \partial_z (\|y^T(x,y)-y^*(x)\|^2) \nonumber\\
    &=\partial(\Phi+ h)(x)\times \{\zero\} + \frac{7}{8} \partial_z (\|y^T(x,y)-y^*(x)\|^2),
\end{align}
where the second ``$=$'' uses $\partial_y(\Phi+ h)(x)=\{\zero\}$. 

Next, we derive an upper bound for the subdifferentials $\partial_{z} (\|y^T(z)-y^*(x)\|^2)$. Take any Frechet subdifferential $u\in \widehat{\partial}_z (\|y^T(z)-y^*(x)\|^2)$, we obtain from its definition that 
\begin{align}
	0&\le \liminf_{z'\neq z, z'\to z} \frac{\|y^T(z')-y^*(x')\|^2 - \|y^T(z)-y^*(x)\|^2 - u^\top (z'-z)}{\|z'-z\|} \nonumber\\
	&= \liminf_{z'\neq z, z'\to z} \frac{[y^T(z')-y^*(x') + y^T(z)-y^*(x)]^\top [y^T(z')-y^*(x') - y^T(z)+y^*(x)] - u^\top (z'-z)}{\|z'-z\|} \nonumber\\
	&\le \liminf_{z'\neq z, z'\to z} \Big[\frac{\big(\|y^T(z')-y^*(x')\| + \|y^T(z)-y^*(x)\|\big) \big(\|y^T(z') - y^T(z)\| + \|y^*(x) - y^*(x')\|\big)}{\|z'-z\|}\nonumber\\
	&\quad-\frac{u^\top (z'-z)}{\|z'-z\|}\Big]\nonumber\\
	&\overset{(i)}{\le} \liminf_{z'\neq z, z'\to z} \Big[\big(\|y^T(z')-y^*(x')\| + \|y^T(z)-y^*(x)\|\big) \Big(2.5^{T+1}+\kappa\frac{\|x'-x\|}{\|z'-z\|}\Big) - \frac{u^\top (z'-z)}{\|z'-z\|}\Big]\nonumber\\
	&\overset{(ii)}{\le} 2\big(2.5^{T+1}+\kappa\big)\|y^T(z)-y^*(x)\| - \limsup_{z'\neq z, z'\to z} \frac{u^\top (z'-z)}{\|z'-z\|}\nonumber\\
	&\overset{(iii)}{=} 2\big(2.5^{T+1}+\kappa\big)\|y^T(z)-y^*(x)\| - \|u\|\nonumber,
\end{align}
where (i) uses Lemma \ref{lemma: yt_lip} that $y^T(\cdot,\cdot)$ is a $(2.5^{T+1}-1.5)$-Lipschitz continuous mapping, (ii) uses $\kappa\ge 1$, $2\sqrt{2}\le 3$ and $\|x'-x\|\le\|z'-z\|$, and the equality in (iii) is achieved by letting $z'=z+\sigma u$ with $\sigma\to 0^+$. Hence, we conclude that $\|u\|\le 8\kappa \|y^T(x,y)-y^*(x)\|$. Since $\partial_z (\|y^T(x,y)-y^*(x)\|^2)$ is the graphical closure of $\widehat{\partial}_z (\|y^T(z)-y^*(x)\|^2)$, we have that 
\begin{align}
    \dist_{\partial_z (\|y^T(z)-y^*(x)\|^2)}(\zero) \le 2\big(2.5^{T+1}+\kappa\big) \|y^T(z)-y^*(x)\|. \label{eq:dy2_diffbound}
\end{align}

Next, using the subdifferential decomposition \eqref{eq: subgrad}, we obtain that
\begin{align}
    &\dist_{\partial_z H(z_{k+1})}(\zero) \nonumber\\
    &\le \dist_{\partial (\Phi+ h)(x_{k+1})}(\zero) + \frac{7}{8} \dist_{\partial_z (\|y^T(z_{k+1})-y^*(x_{k+1})\|^2)}(\zero) \nonumber\\
    &\overset{(i)}{\le} {\Big(\frac{1}{\beta}+L_{\Phi}\Big)} \|x_{k} - x_{k+1}\| + \sqrt{\Gamma}\|y_{k+1} - y^*(x_k)\| + 2\big(2.5^{T+1}+\kappa\big) \|y^T(z_{k+1})-y^*(x_{k+1})\|\nonumber\\
    &\overset{(ii)}{\le} \frac{2}{\beta} \|x_{k} - x_{k+1}\| + \sqrt{\Gamma}\|y_{k+1} - y^*(x_k)\| + 2\big(2.5^{T+1}+\kappa\big) \|y_{k+2}-y^*(x_{k+1})\| ,\label{eq: subgrad}
\end{align}
where (i) uses \cref{eq: dPhih}\&\eqref{eq:dy2_diffbound}, (ii) uses the hyperparameter choice that $\beta \le {\frac{1}{2}(L_{\Phi}+\Gamma+\kappa^2)^{-1}}$ in Proposition \ref{lemma: lyapunov}.
\end{proof}
}

\section{Proof of \Cref{thm:objrate}} 
\theoremobjrate*
\begin{proof}
Recall that we have shown in the proof of \Cref{thm: 1} that: 1) $\{H(x_k,y_k)\}_k$ decreases monotonically to the finite limit $H^*$; 2) for any limit point $x^*$ of $\{x_k\}_k$, $H(x^*)$ has the constant value $H^*$. Hence, the \KL inequality holds after a sufficiently large number of iterations, i.e., there exists $k_0\in \mathbb{N}^+$ such that the following holds for all $k\ge k_0$.
\begin{align}
	\varphi'(H({x_k,y_k}) -H^*) \dist_{{\partial_z H(x_k,y_k)}}(\zero)\ge 1. \nonumber
\end{align}
Rearranging the above inequality and utilizing \cref{eq: subgrad}, we obtain that for all $k\ge k_0$, 
\begin{align}
    &\varphi'(H(x_k,y_k) -H^*) \nonumber\\
    &\ge \frac{1}{\dist_{\partial_z H(x_k,y_k)}(\zero)} \nonumber\\
    &\ge {\Big(\frac{2}{\beta} \|x_{k-1} - x_{{k}}\| + \sqrt{\Gamma}\|y_k - y^*(x_{k-1})\| + 2\big(2.5^{T+1}+\kappa\big) \|y_{k+1}-y^*(x_k)\|\Big)^{-1}}. \label{eq: 7}
    %\frac{1}{\frac{1}{\beta} \|x_{k} - x_{k+1}\|  + \sqrt{\Gamma} \|y^*(x_k)-y^T(x_k,y_k)\| + \frac{7}{4}\big((1+\alpha L)^T+ \kappa\big) \|y^T(x_{k+1})-y^*(x_{k+1})\|}
\end{align}

For simplicity, denote $d_k:=H(x_k,y_k)-H^*$ as the function value gap. Then, for a sufficiently large $k$ such that \cref{eq: 7} holds, we have
\begin{align}
    &c^{-2}d_k^{2(1-\theta)}\nonumber\\
    &\stackrel{(i)}{=}\big[\varphi'(d_k)\big]^{-2} \nonumber\\
    &\stackrel{(ii)}{\le} \Big(\frac{2}{\beta} \|x_{k-1} - x_{{k}}\| + \sqrt{\Gamma}\|y_k - y^*(x_{k-1})\| + 2\big(2.5^{T+1}+\kappa\big) \|y_{k+1}-y^*(x_k)\|\Big)^2 \nonumber\\
    &\stackrel{(iii)}{\le} \frac{12}{\beta^2} \|x_{k-1} - x_{k}\|^2 + 3\Gamma\|y_k - y^*(x_{k-1})\|^2 + 24(5^{T+1}+\kappa^2) \|y_{k+1}-y^*(x_k)\|^2\nonumber\\
    &\le \max\Big(\frac{48}{\beta},24\Gamma,24(5^{T+1}+\kappa^2)\Big)\Big(\frac{1}{4\beta} \|x_{k-1} - x_{k}\|^2 + \frac{1}{8}\big(\|y_k-y^*(x_{k-1})\|^2 + \|y_{k+1}-y^*(x_k)\|^2 \big)\Big) \label{eq: dphi_power}\\
    &\stackrel{(iv)}{\le} \max\Big(\frac{48}{\beta},24\Gamma,24(5^{T+1}+\kappa^2)\Big) \big(H(x_{k-1},y_{k-1})-H(x_k,y_k)\big)\nonumber\\
    &\le \max\Big(\frac{48}{\beta},24\Gamma,24(5^{T+1}+\kappa^2)\Big) \big(d_{k-1}-d_{k}\big), \nonumber
\end{align}
where (i) uses the equality that $\varphi'(s)=cs^{\theta-1}$ based on \Cref{def: KL}, (ii) uses \cref{eq: 7}, (iii) uses the inequality that $(a+b+c)^2\le 3a^2+3b^2+3c^2$, and (iv) uses \Cref{lemma: lyapunov}. Rearranging the above inequality yields that
\begin{align}
    d_{k-1}\ge d_{k}+Cd_k^{2(1-\theta)},\label{eq: dphi_bound}
\end{align}
where $C:=\big[c\max\big(\frac{48}{\beta},24\Gamma,24(5^{T+1}+\kappa^2)\big)\big]^{-1}>0$ is a constant. 

Next, we prove the convergence rates case by case.

%(Case I) If $\theta=1$, \cref{eq: dphi_bound} implies that $d_{k}-d_{k-1}\le -C<0$ whenever $d_k>0$. Hence, $d_k$ reaches 0 (i.e. $H(x_k,y_k)$ reaches $H^*$) within finite number of iterations.

(Case I) If $\theta\in\big(\frac{1}{2},1\big)$, then since $d_{k}\ge 0$, \cref{eq: dphi_bound} implies that $d_{k-1}\ge Cd_k^{2(1-\theta)}$, which is equivalent to that
\begin{align}
    C^{-\frac{1}{2\theta-1}}d_{k} \le \Big(C^{-\frac{1}{2\theta-1}}d_{k-1}\Big)^{\frac{1}{2(1-\theta)}}. \nonumber
\end{align}
Since $d_k\downarrow 0$, $C^{-\frac{1}{2\theta-1}}d_{k_0}\le e^{-1}$ for sufficiently large $k_0\in\mathbb{N}^+$. Hence, the above inequality implies that for $k\ge k_0$,
\begin{align}
    C^{-\frac{1}{2\theta-1}}d_{k} \le \Big(C^{-\frac{1}{2\theta-1}}d_{k_0}\Big)^{\Big[\frac{1}{2(1-\theta)}\Big]^{k-k_0}} \le \exp\Big(-\Big[\frac{1}{2(1-\theta)}\Big]^{k-k_0}\Big).
\end{align}
Since $\theta\in\big(\frac{1}{2},1\big)$ implies that $\frac{1}{2(1-\theta)}>1$, the above inequality implies that $d_k\downarrow 0$ (i.e. $H(x_k,y_k)\downarrow H^*$) at the super-linear rate given by \cref{eq: superlinear_converge}. 

(Case II) If $\theta=\frac{1}{2}$, then \cref{eq: dphi_bound} implies that
\begin{align}
    d_{k}\le (1+C)^{-1}d_{k-1},\nonumber
\end{align}
which further implies that $d_k\downarrow 0$ (i.e. $H(x_k,y_k)\downarrow H^*$) at the linear rate given by \cref{eq: linear_converge}.

(Case III) If $\theta\in\big(0,\frac{1}{2}\big)$, then {denote $\psi(s)=\frac{1}{1-2\theta}s^{-(1-2\theta)}$} and consider the following two subcases.

If $d_{k-1}\le 2d_k$, then 
\begin{align}
    \psi(d_k)-\psi(d_{k-1})=&\int_{d_k}^{d_{k-1}} -\psi'(s)ds=\int_{d_k}^{d_{k-1}} s^{-2(1-\theta)} ds \stackrel{(i)}{\ge} d_{k-1}^{-2(1-\theta)}(d_{k-1}-d_k) \nonumber\\
    \stackrel{(ii)}{\ge}& C\Big(\frac{d_k}{d_{k-1}}\Big)^{2(1-\theta)} \stackrel{(iii)}{\ge} 2^{-2(1-\theta)}C \nonumber
\end{align}
where (i) uses $d_k\le d_{k-1}$ and $-2(1-\theta)<-1$, (ii) uses \cref{eq: dphi_bound}, and (iii) uses $C>0$, $d_{k-1}\le 2d_k$ and $2(1-\theta)>1$. 

If $d_{k-1}> 2d_k$, then for $k\ge k_0$
\begin{align}
    \psi(d_k)-\psi(d_{k-1})=&\frac{1}{1-2\theta}\big(d_k^{-(1-2\theta)}-d_{k-1}^{-(1-2\theta)}\big)\stackrel{(i)}{\ge} \frac{1}{1-2\theta}\big(d_k^{-(1-2\theta)}-(2d_k)^{-(1-2\theta)}\big)\nonumber\\
    \ge&\frac{1-2^{-(1-2\theta)}}{1-2\theta}d_k^{-(1-2\theta)} \stackrel{(ii)}{\ge} \frac{1-2^{-(1-2\theta)}}{1-2\theta}d_{k_0}^{-(1-2\theta)} \nonumber
\end{align}
where (i) uses $d_{k-1}> 2d_k$ and $-(1-2\theta)<0$, and (ii) uses $-(1-2\theta)<0$, $\frac{1-2^{-(1-2\theta)}}{1-2\theta}>0$ and $d_k\le d_{k_0}$. 

Combining the above two subcases yields that
\begin{align}
    \psi(d_k)-\psi(d_{k-1})\ge \min\Big[2^{-2(1-\theta)}C, \frac{1-2^{-(1-2\theta)}}{1-2\theta}d_{k_0}^{-(1-2\theta)} \Big]=\frac{U}{1-2\theta}>0, k\ge k_0 \nonumber
\end{align}
where $U:=\min\Big( 2^{-2(1-\theta)}C(1-2\theta), \big(1-2^{-(1-2\theta)}\big)d_{k_0}^{-(1-2\theta)} \Big)>0$. Iterating the above inequality yields that 
\begin{align*}
    \psi(d_k)\ge \psi(d_{k_0})+\frac{U}{1-2\theta}(k-k_0) \ge \frac{U}{1-2\theta}(k-k_0)
\end{align*}
Then by substituting $\psi(s)=\frac{1}{1-2\theta}s^{-(1-2\theta)}$, the inequality above implies that
that $d_k\downarrow 0$ (i.e. $H(x_k,y_k)\downarrow H^*$) at the sub-linear rate given by \cref{eq: sublinear_converge}.
\end{proof}

\end{document}